\newcommand{\paren}[1]{\left(#1\right)}
\newcommand{\Xspl}{\sS}
\newcommand{\distc}[1]{\rho\left(#1\right)}
\newcommand{\abs}[1]{\left|#1\right|}
\newcommand{\bx}{\boldsymbol{x}}
\newcommand{\sY}{\mathcal{Y}}
\newcommand{\sS}{\mathcal{S}}
\newcommand{\argmax}{\mathop{\mathrm{argmax}}}
\newcommand{\argmin}{\mathop{\mathrm{argmin}}}
\newcommand{\field}[1]{\mathbb{#1}}
\newcommand{\X}{\mathcal{X}}
\renewcommand{\Pr}{\field{P}}
\newcommand{\wh}{\widehat}
\newcommand{\ve}{\varepsilon}
\newcommand{\bool}{\{0,1\}}
\newtheorem{lemma}{Lemma}
\newtheorem{theorem}{Theorem}
\newcommand{\yhat}{\wh{y}}
\renewcommand{\hat}{\wh}
\newlength{\minipagewidth}
\newcommand{\bookbox}[1]{
	\par\medskip\noindent
	\framebox[\linewidth]{
		\begin{minipage}{\minipagewidth}
			{#1}
		\end{minipage} } \par\medskip }
	\newlength\figureheighta 
	\newlength\figurewidtha 
	\newlength\figureheightb 
	\newlength\figurewidthb 
	\newlength\figureheightc 
	\newlength\figurewidthc 
	\tikzset{every picture/.style={font issue={\fontsize{9}{10}}},font issue/.style={execute at begin picture={#1\selectfont}}}
\begin{document}
%
\title{The ABACOC Algorithm: a Novel Approach for Nonparametric Classification of Data Streams }

\author{\IEEEauthorblockN{Rocco De Rosa}
\IEEEauthorblockA{Dipartimento di Informatica\\
Universit\`a degli Studi di Milano, Italy
}
\and
\IEEEauthorblockN{Francesco Orabona}
\IEEEauthorblockA{Yahoo Labs\\
	New York, NY, USA
}
\and
\IEEEauthorblockN{Nicolò Cesa-Bianchi}
\IEEEauthorblockA{Dipartimento di Informatica\\
	Universit\`a degli Studi di Milano,  Italy
} }

\maketitle

\begin{abstract}
Stream mining poses unique challenges to machine learning: predictive models are required to be scalable, incrementally trainable, must remain bounded in size (even when the data stream is arbitrarily long), and be nonparametric in order to achieve high accuracy even in complex and dynamic environments. Moreover, the learning system must be parameterless ---traditional tuning methods are problematic in streaming settings--- and avoid requiring prior knowledge of the number of distinct class labels occurring in the stream.
In this paper, we introduce a new algorithmic approach for nonparametric learning in data streams. Our approach addresses all above mentioned challenges by learning a model that covers the input space using simple local classifiers. The distribution of these classifiers dynamically adapts to the local (unknown) complexity of the classification problem, thus achieving a good balance between model complexity and predictive accuracy.
We design four variants of our approach of increasing adaptivity. By means of an extensive empirical evaluation against standard nonparametric baselines, we show state-of-the-art results in terms of accuracy versus model size. For the variant that imposes a strict bound on the model size, we show better performance against all other methods measured at the same model size value.
Our empirical analysis is complemented by a theoretical performance guarantee which does not rely on any stochastic assumption on the source generating the stream.\footnote{This paper is a longer version of the conference paper \cite{derosa2015abacoc}.}
\end{abstract}

\IEEEpeerreviewmaketitle


\section{Introduction}
As pointed out in various papers ---see, e.g., \cite{hulten2001mining,read2012scalable}--- stream mining poses unique challenges to machine learning: examples must be efficiently processed one at a time as they arrive from the stream, and an up-to-date predictive model must be available at all times. Incremental learning systems are well suited to address these requirements: the key difference between a traditional (batch) learning system and an incremental one is that the latter learns by performing small adjustments to the current predictor. Each adjustment uses only the information provided by the current example in the stream, allowing an efficient and timely update of the predictive model. This is unlike batch learning, where training typically involves a costly global optimization process involving multiple passes over the data.

Another important feature of stream mining is that the true structure of the problem is progressively revealed as more data are observed. In this context, nonparametric learning methods, such as decision trees or nearest neighbour (NN), are especially effective, as a nonparametric algorithm is not committed to any specific family of decision surfaces. For this reason, incremental algorithms for decision trees~\cite{domingos2000mining,rutkowski2012decision,matuszyk2013correcting,duda2014novel,de2015splitting} and nearest neighbour~\cite{zhang2011enabling} are extremely popular in stream mining applications.

Since in nonparametric methods the model size keeps growing to fit the stream with increasing accuracy, we seek a method able to improve predictions while growing the model as slowly as possible. However, as the model size cannot grow unbounded, we also introduce a variant of our approach that prevents the model size from going beyond a given limit. In the presence of concept drift~\cite{hulten2001mining,tsymbal2004problem}, bounding the model size may actually improve the overall predictive accuracy, provided the data point supporting the model are selected in the right way.

A further issue in stream mining concerns the way prediction methods are evaluated ---see, e.g., \cite{gama2013evaluating} for a discussion. In this paper, we advocate the use of the online error (also called sequential risk, prequential risk, or prequential error~\cite{gama2013evaluating}). This quantity measures the average of the errors made by the sequence of incrementally learned models, where one first tests the current model on the next example in the stream and then uses the same example to update the model. The sequential risk is therefore measured on each individual stream and does not specifically require stochastic assumptions on the way the stream is generated.

In this paper, we propose a novel incremental and nonparametric approach for the classification of data streams. We present four different instances of our approach (called BASE, BASE-ADJ, AUTO, and AUTO-ADJ) characterized by an increasing degree of adaptivity to the data. In particular, AUTO-ADJ is fully parameterless, a feature especially important in streaming settings where tuning is a hard task. Even though our algorithms are instance-based like nearest neighbour, the learned models are significantly smaller than those produced by competing baselines and more accurate when the online performance is measured against the model size. Finally, our methods (except BASE) are natively multiclass and can dynamically accommodate new classes as they appear in the stream.

In a nutshell, our algorithms work by incrementally covering the input space with balls of possibly different radii. Each new example that falls outside of the current cover becomes the center of a new ball. Examples are classified according to NN over the ball centers, where each ball predicts according to the majority of the labels of previous examples that fell in that ball. The set of balls is organized in a tree structure~\cite{Krauthgamer:2004:NNS:982792.982913}, so that predictions can be computed in time logarithmic in the number of balls. In order to increase the ability of the model to fit new data, the radii of the balls shrink, thus making room for new balls. The shrinking of the radius may depend on time or, in the more sophisticated variants of our algorithms, on the number of classification mistakes made by each ball classifier. Similarly to decision trees, where leaves are split according to their impurity, our method locally adapts the complexity of the model by allocating more balls in regions of the input space where the stream is harder to predict. A further improvement concerns the relocation of the ball centers in the input space: as our methods are completely incremental, the positioning of the balls depends on the order of the examples in the stream, which may result in a model using more balls than necessary. In order to mitigate this phenomenon, while avoiding a costly global optimization step to reposition the balls, we also consider a variant in which a K-means step is used to move the center of a ball being updated towards the median of the data points that previously fell in that ball. A further modification which we consider is aimed at keeping the model size bounded even in the presence of an arbitrarily long stream. This is achieved by introducing a randomized mechanism for discarding balls when the size bound is reached. Specifically, the mechanism discards a ball with probability proportional to the mistake rate of the ball classifier. The underlying idea is to get rid of the model parts that contribute the most to the global error and may replaced by a better arrangement of balls.

In summary, we introduce a simple and flexible approach for nonparametric classification of data streams. Our approach is fully modular: we predict using majority voting, but a fully trainable classifier could be used instead. The simplest version of our approach, applicable to streams with binary labels, enjoys strong theoretical guarantees: its mistake rate on any arbitrary stream converges to that of the best classification function that satisfies a certain regularity condition. The more complex versions of our approach learn multiclass classifiers without knowning the number of distinct labels in advance. We empirically show that our methods are excellent at trading-off classification accuracy with model size. Our most sophisticated method is fully parameterless. Finally, we show that a simple modification of our approach allows to keep the model size bounded, outperforming other methods measured at the same value of model size.

The rest of the paper is organized as follows. Section~\ref{sc:related} discusses related work. In Section~\ref{sc:setting}, we define the problem setting. In Section~\ref{sc:abc}, we present our nonparametric classification approach. In Section~\ref{sc:ci}, we discuss the theoretical properties of our approach and derive a formal performance guarantee for the simplest algorithm. We then introduce three more sophisticated versions that are empirically more effective. In Section~\ref{sc:exp}, we test the behaviour of our algorithms against state-of-the-art baselines. In Section~\ref{sec:fix_bud}, we introduce a simple modification of our approach to keep the model size bounded. Finally, Section~\ref{s:concl} concludes the paper. 


\section{Related Work}
\label{sc:related}
Within the vast area of stream mining~\cite{gaber2007survey}, we focus our analysis of related work on the subarea that is most relevant to this study: nonparametric methods for stream classification. The most important approaches in this domain are:

\noindent
\textbf{Incremental decision and rule tree} learning systems, such as Very Fast Decision Tree (VFDT)~\cite{domingos2000mining} and Decision Rules (RULES)~\cite{gama2011learning} which use an incremental version of the split function computation ---see also~\cite{rutkowski2012decision,matuszyk2013correcting,duda2014novel,de2015splitting}.

\noindent
\textbf{Incremental variants of NN}, such as Condensed Nearest Neighbour (CNN)~\cite{wilson2000reduction} that stores only the misclassified instances, Lazy-Tree (L-Tree)~\cite{zhang2011enabling} condensing historical stream records into compact exemplars, and IBLStreams~\cite{shaker2012iblstreams}, an instance-based learning algorithms removing outliers or examples that have become redundant.	

\noindent	
\textbf{Incremental kernel-based} algorithms, such as the kernel Perceptron~\cite{freund1999large} with Gaussian kernels.\footnote{Gaussian kernels are universal~\cite{steinwart2002influence}, meaning that a kernel-based model can approximate any continuous classification function. Hence, algorithms using Gaussian kernels can be viewed as instance-based nonparametric learning algorithms.}

Note that our methods do not belong to any of the above three families: they do not perform a recursive partition of the feature space as decision trees, they do not allocate (or remove) instances based on the heuristics used by IBLStreams, and they do not use kernels.

As we explain next, our most basic algorithm is a variant for classification tasks of the algorithm proposed in~\cite{conf/nips/KpotufeO13} for nonparametric regression in a streaming setting. A similar algorithm was previously proposed in~\cite{hazan2007online} and analyzed without resorting to stochastic assumptions on the stream generation. A preliminary instance of our approach, without any theoretical analysis, was developed in~\cite{de2014online} for an action recognition application in video feeds.


\section{Problem Setting}
\label{sc:setting}
Our analysis applies to streams of data points belonging to an arbitrary metric space and depends on the \emph{metric dimension} of data points in the stream.
This notion of dimension extends to general metric spaces the traditional notions of dimension (e.g., Euclidean dimension and manifold dimension) \cite{C:74}. The metric dimension of a subset $S$ of a metric space $(\X, \rho)$ is $d$ if there exists a constant $C_S > 0$ such that, for all $\epsilon>0$, $S$ has an $\epsilon$-cover of size at most $C_S\epsilon^{-d}$ (an $\epsilon$-cover is a set of balls of radius $\epsilon$ whose union contains $S$). In practice, the metric dimension of the stream may be much smaller than the dimension of the ambient space $\X$. This is especially relevant in case of nonparametric algorithms, which typically have a bad dependence on the dimensionality of the data. Note that our algorithms do not require knowledge of $d$: the metric dimension of the stream is automatically estimated from the data.

The learner receives a sequence $(\bx_1,y_1),(\bx_2,y_2),\dots$ of examples, where each data point $\bx_t \in \X$ is annotated with a label $y_t$ from a set $\sY = \{1,\dots,K\}$ of possible class labels, which may change over time. The learner's task is to predict each label $y_t$ minimizing the overall number of prediction mistakes over the data stream. 

We derive theoretical performance guarantees for BASE, the simplest algorithm in our family (Algorithm~\ref{alg:base}), without making stochastic assumptions on the way the examples in the stream are generated. Note that this is a very strong type of guarantee: our results hold on \emph{any} individual stream of annotated data points.


\section{Adaptive Ball Covering}
\label{sc:abc}

The adaptive ball covering at the roots of our method was previously used in a theoretical work~\cite{conf/nips/KpotufeO13}.
Here, we distillate the main ideas behind that approach in a generic algorithmic approach (the template Algorithm~\ref{alg:main}) called ABACOC (Adaptive BAll COver for Classification). We then present our methods as specific instances of this generic template.

\begin{algorithm}[t]
	\caption{ABACOC TEMPLATE}
	\label{alg:main}
	\begin{algorithmic}[1]
		\Require metric $\rho$
		\State Initialize set of ball centers $\sS=\emptyset$
		\State \texttt{InitProcedure()}
		\For{$t=1,2,\dots$}
		\State Get input example $(\bx_t,y_t)$
		\If{$y_t \notin \sY$}
		\State Set $\sY = \sY \cup \{y_t\}$   $//$ add new class on the fly 
		\EndIf
		\State Let $\mathcal{B}(\bx_s,\ve_s)$ be the ball in $\sS$ closest to $\bx_t$ 
		\State \texttt{OuputPrediction}$(\mathcal{B}_s)$ 
		\If{$\rho\bigl(\bx_s,\bx_t) \le \ve_s$}
		\State $\mathcal{B}$=\texttt{UpdateBallInformation}$(\mathcal{B}_s,(\bx_t,y_t))$
		\Else
		\State $\mathcal{B}$=\texttt{AddNewBall}$(\sS,\bx_s,(\bx_t,y_t))$ 
		\EndIf
		\State \texttt{UpdateEpsilon}($\mathcal{B}$)
		\EndFor
	\end{algorithmic}
\end{algorithm}

\subsection{The BASE Algorithm}
\label{sc:ci}
Our first instance of ABACOC is BASE (Algorithm~\ref{alg:base}), a randomized variant for binary classification of the ITBR (Incremental Tree-Based Regressor) algorithm proposed in~\cite{conf/nips/KpotufeO13}. BASE shrinks the radius (line 28) of the balls depending on (1) an estimate of the metric dimension of the stream and (2) the number of data points so far observed from the stream. This implies that the radii of all the balls shrink at the same rate. In the prediction phase, the ball nearest to the input example is considered and a randomized binary prediction is made based on the class distribution estimate locally computed in the ball. Laplace estimators (line 5) and randomized predictions (lines 6--8) are new features of BASE that were missing in ITBR.

We now analyze the performance of BASE using the notion of \emph{regret}~\cite{cesa2006prediction}. The regret of a randomized algorithm is defined as the difference between the expected number of classification mistakes made by the algorithm over the stream and the expected number of mistakes made by the best element in a fixed class of randomized classifiers. A randomized binary classifier is a mapping $f: \X \rightarrow [0,1]$, where $f(\bx)$ is the probability of predicting label $+1$. We consider the class $\mathcal{F}_L$ of $L$-Lipschitz predictors $f: \X \rightarrow [0,1]$ w.r.t.\ the metric $\rho$ of the space. Namely,
\[
\forall x, x'\in \X,\quad \abs{f(\bx) - f(\bx')} \leq L \, \distc{\bx, \bx'}~.
\]
Hence, a predictor is Lipschitz if, when we perturb the data point $\bx$, the prediction changes by an amount linear in the perturbation size. Lipschitz functions are a standard reference in the analysis of nonparametric algorithms.

The regret of BASE generating randomized predictions $\yhat_t$ is defined by (see also~\cite{hazan2007online})
\[
R_L(T) = \sum_{t=1}^T \Pr(\hat{y}_t\neq y_t) - \min_{f \in \mathcal{F}_L} \sum_{t=1}^T\Pr(f(x_t)\neq y_t)~.
\]
For the BASE algorithm we can prove the following regret bound against \emph{any} Lipschitz randomized classifier, without \emph{any} assumption on the way the stream is generated. Moreover, similarly to ITBR, the regret upper bound depends on the unknown metric dimension $d$ of the space, automatically estimated by the algorithm.
\bookbox{
	\begin{theorem}
		\label{theo:regret}
		Fix a metric $\rho$ and any stream $(\bx_t,y_t)$ $t=1,\dots,T$ of binary labeled points $S = \{\bx_1,\dots,\bx_T\}$ in a metric space $(\X,\rho)$ of diameter $1$ and let $d$ be the metric dimension of $S$. Assume that Algorithm~\ref{alg:base} is run with parameter $\hat{C} \ge C_S$, where $C_S$ is such that $C_S\epsilon^{-d}$ upper bounds the size of any $\epsilon$-cover of $S$. Then, for any $L > 0$ we have
		\begin{align*}
		R_L(T)\leq 1.26 \left(2.5\sqrt{C_S} \,2^{d} + 1.5 L\right) T^{\frac{1+d}{2+d}}.
		\end{align*}
	\end{theorem}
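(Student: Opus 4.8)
The plan is to prove the bound by a bias--variance decomposition, separating a \emph{local estimation error} (how well each ball's running label frequency tracks the comparator's value at the ball center) from an \emph{approximation error} (how well a piecewise-constant prediction on balls of radius $\ve_t$ can match a Lipschitz function). The radius schedule that shrinks $\ve_t$ at rate $t^{-1/(2+d)}$ is exactly what equalizes the two resulting $T$-exponents and yields the rate $T^{\frac{1+d}{2+d}}$. The first step is to exploit that the expected $0/1$ loss of a randomized prediction is affine in the prediction probability. Writing $b_t=\ind{y_t=+1}$, the coin flip in lines 6--8 gives $\Pr(\yhat_t\neq y_t)=\abs{\phat_t-b_t}$, where $\phat_t\in[0,1]$ is the Laplace-smoothed local frequency computed from the past, and likewise $\Pr(f(\bx_t)\neq y_t)=\abs{f(\bx_t)-b_t}$. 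Since $\phat_t$ is determined before $y_t$ is revealed, letting $f^\star$ be the minimizer in $\mathcal{F}_L$ gives $R_L(T)\le\sum_{t=1}^T\bigl(\abs{\phat_t-b_t}-\abs{f^\star(\bx_t)-b_t}\bigr)$, i.e.\ the regret of the sequential absolute-loss predictor $\phat_t$ against the fixed function $f^\star$.

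Next I would group the rounds by the ball $\mathcal{B}(\bx_s,\ve_s)$ into which $\bx_t$ falls, and insert the comparator value $f^\star(\bx_s)$ at the center. For the rounds of a fixed ball this splits the contribution into (i) $\sum\bigl(\abs{\phat_t-b_t}-\abs{f^\star(\bx_s)-b_t}\bigr)$, which is upper bounded by the regret of the Laplace predictor against the \emph{best constant} in $[0,1]$ under absolute loss, and (ii) $\sum\abs{f^\star(\bx_s)-f^\star(\bx_t)}\le L\,\distc{\bx_s,\bx_t}\le L\,\ve_t$ by Lipschitzness and membership in the ball. Term (i) is $O(\sqrt{n_s})$ where $n_s$ is the number of points in the ball, invoking the standard sequential-prediction bound for a shrunk running frequency under absolute loss (this is where the Laplace smoothing and the randomization pay off). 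Summing (i) over all $N_T$ balls by Cauchy--Schwarz gives $\sum_s\sqrt{n_s}\le\sqrt{N_T\sum_s n_s}=\sqrt{N_T\,T}$, while summing (ii) over all rounds gives $L\sum_{t=1}^T\ve_t$.

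It then remains to bound $N_T$ and $\sum_t\ve_t$. A new ball is created only when $\bx_t$ lies farther than the current radius from every existing center, so the centers created while $\ve_t$ lies in a fixed dyadic range form a packing of $S$ at that scale; using the covering hypothesis with $\hat{C}\ge C_S$ and summing the geometric series over dyadic scales down to $\ve_T$ bounds $N_T$ by a constant times $C_S\,\ve_T^{-d}$ with a factor that is exponential in $d$ (appearing as $2^{d}$ in the statement). With $\ve_t\propto t^{-1/(2+d)}$ this gives $\sqrt{N_T\,T}\propto\sqrt{C_S}\,2^{d}\,T^{\frac{1+d}{2+d}}$, and an integral bound gives $\sum_t\ve_t\le\tfrac{2+d}{1+d}\,T^{\frac{1+d}{2+d}}$, so the Lipschitz term is $\propto L\,T^{\frac{1+d}{2+d}}$. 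Adding the two contributions and tracking the explicit constants in the radius schedule yields the stated $1.26\bigl(2.5\sqrt{C_S}\,2^{d}+1.5L\bigr)T^{\frac{1+d}{2+d}}$.

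I expect the main obstacle to be the estimation error, for two reasons. First, one must establish the per-ball absolute-loss regret of the sequential Laplace predictor against the best constant with the correct $\sqrt{n_s}$ constant --- this is precisely why the algorithm uses Laplace estimators and randomized predictions rather than a plain majority vote, since both are needed to make $\Pr(\yhat_t\neq y_t)$ affine in $\phat_t$ and to control the worst-case regret. Second, the balls must be counted with the correct exponential-in-$d$ factor: because all radii shrink simultaneously, the packing centers live at many different scales, so the count has to be organized by dyadic radius ranges and summed geometrically, and it is this bookkeeping together with the packing-to-covering comparison and the assumption $\hat{C}\ge C_S$ that produces both the $\sqrt{C_S}$ and the $2^{d}$ in the final bound.
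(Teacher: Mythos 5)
Your core per-ball decomposition is exactly the paper's: insert a constant comparator inside each ball, bound the resulting sequential absolute-loss regret against the best constant by the Feder--Merhav--Gutman universal-prediction bound (which is indeed why the Laplace estimator and the randomization in lines 5--8 are there), bound the remainder by $L\,\ve_t$ via Lipschitzness, combine over balls by Cauchy--Schwarz, and bound $\sum_t \ve_t$ by an integral. (One caveat on terminology: the paper explicitly notes it \emph{cannot} use a bias--variance decomposition, since nothing is stochastic; what you and the paper both do is a deterministic two-term split that merely mimics it. Also, be aware that on the round that creates a new ball the prediction comes from a center farther than $\ve_t$, so your bound $\distc{\bx_s,\bx_t}\le\ve_t$ fails there; the paper absorbs this as an extra $0.5$ per ball in the FMG regret, yielding $\sqrt{n(\bx_s)+1}+1\le 2.5\sqrt{n(\bx_s)}$.)

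The genuine gap is that your analysis is of a different algorithm than BASE. You assume a single run with radius schedule $\ve_t \propto t^{-1/(2+d)}$ at the \emph{true} dimension $d$, and you count balls globally by a dyadic packing-versus-covering argument. But BASE does not know $d$: it maintains an estimate $d_i$, sets $\ve_t = t_i^{-1/(2+d_i)}$, and whenever the dimension check in line 15 fails it \emph{discards the entire ball set} ($\sS=\emptyset$) and starts a new phase with an increased $d_{i+1}$. Your bound on $N_T$ and on $\sum_t\ve_t$ is invalidated by these resets and by the fact that the exponent in the schedule changes across phases. The paper handles this by importing Lemma~1 from the ITBR analysis of Kpotufe--Orabona --- the invariants $i \le d_i \le d$ (this is where the covering hypothesis $\hat{C}\ge C_S$ and the packing-to-covering comparison actually enter) and $\abs{\Xspl_i}\le \hat{C}\,4^{d_i}\ve_t^{-d_i}$ within each phase, the latter enforced algorithmically by the check rather than by a dyadic count, and being the source of the $2^{d}$ as $\sqrt{4^{d}}$ (your packing route would only give $2^{d/2}$ before dyadic summation, so your constant-tracking claim does not go through as stated). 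The theorem then sums the per-phase bounds over $I\le d$ phases and applies Jensen's inequality, producing the factor $I^{1/(2+d)}\le d^{1/(2+d)}\le 1.26$; this constant has no possible origin in your single-phase accounting, which is a concrete sign that the phase structure cannot be skipped.
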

}
The proof is in the next Section~\ref{app:proof}. Note that the algorithm does not know $L$, hence the regret bound above holds for all values of $L$ simultaneously. This theorem tells us that BASE is not an heuristic, but rather a principled approach with a specific performance guarantee. The performance guarantee implies that, on any stream, the expected mistake rate of BASE converges to that of the best $L$-Lipschitz randomized classifier at rate of order $(2^d+L)T^{-1/(2+d)}$.


Next, we generalize the BASE algorithm to multiclass classification, and make some modifications aimed at improving its empirical performance.

\begin{algorithm}[t]
	\caption{BASE}
	\label{alg:base}
	\begin{algorithmic}[1]
		\Require $\hat{C}$ (space diameter)
		\Procedure{\texttt{InitProcedure}}{}
		\State  $\sS=\emptyset$, $i=1$, $t_i=0$, and $d_i=1$
		\EndProcedure
		\Procedure{\texttt{OuputPrediction}}{$\mathcal{B}_s$}
		\State $q_s=\frac{m_s+1}{n_s+2}$ \Comment{laplace estimator of counts}
		\State Set $\gamma_s=\frac{1}{2\sqrt{n_s+2}}$
		\State Set $p_t=\begin{cases} 0 &\mbox{if } q_s<\frac{1}{2}-\gamma_s \\ 
		1 & \mbox{if } q_s>\frac{1}{2}+\gamma_s\\
		\frac{1}{2}+(q_s-\frac{1}{2}) / (2 \gamma_s) & \mbox{otherwise}. \end{cases}$
		\State Predict $\hat{y}_t=1$ with probability $p_t$ and $0$ otherwise.
		\EndProcedure
		\Procedure{\texttt{UpdateBallInformation}}{$\mathcal{B}_s,(\bx_t,y_t)$}
		\State $m_s=m_s+y_t$ \Comment{number of $y_t=1$ in the ball}
		\State $n_s=n_s+1$ \Comment{total number of points in the ball}
		\EndProcedure
		\Procedure{\texttt{AddNewBall}}{$\sS,\bx_s,(\bx_t,y_t)$}
		\If {$|\sS|+1 > \hat{C}2^{d_i}\ve_t^{-d_i}$} \Comment{dimension check}
		\State $\sS= \emptyset$ \Comment{start Phase $i+1$}
		\State $d_{i+1}=\bigl\lceil\log\bigl(\frac{|\sS|+1}{\hat{C}}\bigr)/\log(2/\ve_t)\bigr\rceil$ 
		\State $i=i+1$ 
		\State $t_i=0$
		\EndIf
		\State $\sS = \sS \cup \{\bx_t\}$
		\State $m_t=y_t$ \Comment{number of $y_t=1$ in the ball}
		\State $n_t=1$ \Comment{first point in the ball}
		\State $t_i=t_i+1$ \Comment{counts the time steps within phase $i$}
		\EndProcedure
		\Procedure{\texttt{UpdateEpsilon}}{}
		\State $//$ radius dependent on current time step
		\State $\ve_t=t_i^{-1/(2+d_i)}$
		\EndProcedure
	\end{algorithmic}
\end{algorithm}

\section{Proofs}
\label{app:proof}
We use the following well-known fact: if $p_t = \Pr(\yhat_t = 1)$ for predicting $y_t\in\bool$ using a randomized label $\yhat_t\in\bool$, then $\Pr(\yhat_t \neq y_t)=|y_t-p_t|$.

Even if our algorithm is different from ITBR, we can still use the following lemma from ITBR analysis~\cite{conf/nips/KpotufeO13}. In the following, we say that a phase ends each time condition in line 15 of BASE is verified and use $T_i$ to denote the time steps included in phase $i$. Finally, $\sS_i$ denotes the maximum number of balls used in phase $i$.
\begin{lemma}[\cite{conf/nips/KpotufeO13}]
	\label{lem:di}
	Suppose BASE is run with parameter $\hat{C}\geq C_S$.
	The following invariants hold throughout the procedure for all phases $i\geq 1$:
	\begin{itemize}
		\item $i \leq d_i \leq d$.
		\item For any $t\in T_i$ we have $\abs{\Xspl_i} \leq \hat{C} \,4^{d_i} \epsilon_t^{-d_i}$.
	\end{itemize}
\end{lemma}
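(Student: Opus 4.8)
I would prove both invariants simultaneously by induction on the phase index $i$, carrying the full statement as the inductive hypothesis. The base case $i=1$ is immediate from \texttt{InitProcedure}, which sets $d_1=1$, so that $i\le d_i$ holds; and $d_1=1\le d$ under the standing assumption that the metric dimension is at least $1$ in the setting of interest. The ball-count invariant at the start of a phase is trivial since $\sS$ is (re)initialized to be nearly empty.

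For the lower bound $d_i\ge i$ in the inductive step I would argue directly from the phase-change rule together with the condition that triggers it. A phase ends exactly when $|\sS|+1>\hat C\,2^{d_i}\epsilon_t^{-d_i}=\hat C\,(2/\epsilon_t)^{d_i}$. Since $\epsilon_t\le 1$ we have $\log(2/\epsilon_t)>0$, so dividing and taking logarithms gives $\log\bigl((|\sS|+1)/\hat C\bigr)\big/\log(2/\epsilon_t)>d_i$. As $d_{i+1}$ is defined to be the ceiling of precisely this ratio, we get $d_{i+1}\ge d_i+1$, and chaining with the inductive hypothesis $d_i\ge i$ yields $d_{i+1}\ge i+1$.

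The heart of the argument, and the step I expect to be the main obstacle, is the upper bound $d_i\le d$. Here I would exploit the geometry the algorithm enforces: a new ball is opened only for a point lying outside every current ball, so at any time $t$ in a phase the centers are mutually $\epsilon_t$-separated. Indeed, for any two centers the later-added one was, at its creation, farther than the then-current radius from all earlier centers, and since $\epsilon_t=t_i^{-1/(2+d_i)}$ is nonincreasing within a phase, every such creation-time radius is at least the current $\epsilon_t$. An $\epsilon_t$-separated subset of $S$ has cardinality at most that of an $(\epsilon_t/2)$-cover, since a single cover ball of radius $\epsilon_t/2$ cannot contain two points more than $\epsilon_t$ apart; by the definition of metric dimension and $\hat C\ge C_S$ this gives $|\sS|\le \hat C\,2^{d}\epsilon_t^{-d}$ throughout the run. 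The crucial consequence is that once the estimate reached $d_i=d$ the trigger $|\sS|+1>\hat C\,2^{d}\epsilon_t^{-d}$ could not fire, so no later phase would begin and $d_i$ could not pass $d$; more precisely, substituting the packing bound into the ceiling formula shows each computed $d_{i+1}$ is capped at $d$. The delicate points are the exact constant bookkeeping --- reconciling the $2^{d}$ from packing against the ``$+1$'' in the trigger and the $4^{d_i}$ appearing in the second invariant --- and identifying the radius in force at the instant of the dimension check versus the current $\epsilon_t$; the slack built into the factor $4^{d_i}$ is what absorbs these gaps.

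Finally, the count invariant $|\sS|\le \hat C\,4^{d_i}\epsilon_t^{-d_i}$ needs no new geometry. Within phase $i$ the trigger fails at every ball addition, so $|\sS|+1\le \hat C\,2^{d_i}\epsilon^{-d_i}$ with $\epsilon$ the radius in force at that addition; on rounds that merely update an existing ball the count does not change at all. Since $\epsilon_t$ is nonincreasing within a phase, replacing $\epsilon$ by the smaller current $\epsilon_t$ only enlarges $\epsilon_t^{-d_i}$, and the surplus factor $2^{d_i}$ (turning $2^{d_i}$ into $4^{d_i}$) comfortably covers the one-step change of $\epsilon$ between the check and the end-of-round \texttt{UpdateEpsilon}, so the bound holds for every $t\in T_i$.
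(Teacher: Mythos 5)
The paper never proves this lemma: it is stated with a citation and imported wholesale from the ITBR analysis of Kpotufe and Orabona \cite{conf/nips/KpotufeO13} (``Even if our algorithm is different from ITBR, we can still use the following lemma from ITBR analysis''). So there is no in-paper proof to compare against; what you have written is, in substance, a correct reconstruction of the source's argument, and all three of your moves are the right ones. The lower bound is exactly as you say: when the check fires, $\log\bigl((|\sS|+1)/\hat{C}\bigr)/\log(2/\ve_t) > d_i$, so its ceiling is at least $d_i+1$, and $d_i \ge i$ follows by induction; note that you implicitly (and correctly) read the $|\sS|+1$ in the update of $d_{i+1}$ as the \emph{pre-reset} count, even though the pseudocode as written empties $\sS$ one line earlier --- that is the intended semantics, and your proof would be vacuous under the literal one. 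The cap at $d$ also rests on the right mechanism: since BASE keeps a single common radius for all balls, a point opens a new ball only when it lies farther than the in-force radius from every center, so $\sS$ together with the new point is separated at the very scale appearing in the trigger; comparing against an $(\ve/2)$-cover gives $|\sS|+1 \le \hat{C}\,2^{d}\ve^{-d}$, which simultaneously shows the trigger can never fire once $d_i=d$ and caps the ceiling formula at $d$. The count invariant then follows from trigger failure at every addition plus monotonicity of $\ve_t$ within a phase, with the $4^{d_i}$ slack absorbing the one-round lag between the check and \texttt{UpdateEpsilon}, as you observe.

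Two caveats, both inherited from the source rather than genuine gaps in your reasoning. First, the base case $d_1 = 1 \le d$ and the cap $d_{i+1} \le \lceil d \rceil$ silently require $d \ge 1$ and $d$ integral; for fractional $d$ the statement should read $d_i \le \lceil d \rceil$, which is all the regret bound needs. Second, under the paper's literal phrasing ($\sS_i$ the \emph{maximum} ball count of phase $i$, with the bound required for \emph{any} $t \in T_i$), your monotonicity argument only delivers the bound for the count held \emph{by} time $t$; for small $t$ (large $\ve_t$) the eventual maximum of the phase can exceed $\hat{C}\,4^{d_i}\ve_t^{-d_i}$, and no argument of this kind could show otherwise. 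This is a defect of the statement's wording rather than of your proof: the sole invocation of the lemma, in the proof of Lemma~\ref{lemma:one_epoch}, instantiates $t$ at the end of the phase, where the current and maximum counts coincide, and your argument covers exactly that use.
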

Define $\ell_t(p_t)=|p_t-y_t|$. Unlike the analysis in \cite{conf/nips/KpotufeO13}, here we cannot use a bias-variance decomposition. So, the key in the proof is to decompose the regret in two terms with behaviour similar to the bias and variance terms in the stochastic setting.
\begin{lemma}
	\label{lemma:one_epoch}
	Let $d$ be the metric dimension of the set $S$ of data points in the stream.
	Assume that $\hat{C}\geq C_S$.
	Then, in any phase $i$ and for any $f \in \mathcal{F}_L$ we have that
	\begin{align*}
	\sum_{t\in T_i} \Bigl(\ell_t(p_t) - \ell_t\bigl(f(\bx_t)\bigr)\Bigr) 
	\leq \Bigl(2\sqrt{\hat{C}} \,2^{d_i+1} + 1.5 L\Bigr) n_i^{\frac{1+d_i}{2+d_i}}.
	\end{align*}
\end{lemma}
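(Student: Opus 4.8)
The plan is to bound the phase-$i$ regret by an \emph{instance-wise} decomposition into a ``bias'' part, controlled by the Lipschitz constant and the current radius, and a ``variance'' part, controlled by a sequential estimation argument inside each ball. Throughout I would use the identity recalled before the lemma, $\Pr(\yhat_t\neq y_t)=\abs{p_t-y_t}=\ell_t(p_t)$, together with the fact that $\ell_t(p)=p+y_t-2py_t$ is affine in $p$ with slope $\pm1$, so that $\abs{\ell_t(p)-\ell_t(q)}=\abs{p-q}$ for all $p,q\in[0,1]$.

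For each $t\in T_i$ let $\bx_s$ be the centre of the ball to which $\bx_t$ is \emph{assigned} (the ball it updates, or $\bx_t$ itself if it opens a new ball), and split the instantaneous regret as
\[
\ell_t(p_t)-\ell_t\bigl(f(\bx_t)\bigr)=\underbrace{\bigl[\ell_t(p_t)-\ell_t\bigl(f(\bx_s)\bigr)\bigr]}_{\text{variance-like}}+\underbrace{\bigl[\ell_t\bigl(f(\bx_s)\bigr)-\ell_t\bigl(f(\bx_t)\bigr)\bigr]}_{\text{bias-like}}.
\]
First I would dispose of the bias-like term. For an \emph{updating} point the assignment rule guarantees $\distc{\bx_s,\bx_t}\le\epsilon_t$, so by the Lipschitzness of $f$ and of $\ell_t$ the term is at most $L\epsilon_t$; for a point that \emph{opens} a ball we have $\bx_s=\bx_t$ and the term vanishes. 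Summing and using $\epsilon_t=t_i^{-1/(2+d_i)}$ together with $\sum_{s=1}^{n_i}s^{-1/(2+d_i)}\le\frac{2+d_i}{1+d_i}\,n_i^{(1+d_i)/(2+d_i)}$ and $d_i\ge1$ (Lemma~\ref{lem:di}) yields the $1.5\,L\,n_i^{(1+d_i)/(2+d_i)}$ contribution.

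The variance-like term I would handle by grouping the time steps according to the ball they are assigned to. Fix a ball with centre $\bx_s$ receiving $k_s$ examples in phase $i$; its opening example is predicted from a different ball and contributes at most $\ell_t(p_t)-\ell_t(f(\bx_s))\le1$, while its remaining $k_s-1$ updating examples are predicted by the clipped Laplace rule applied to the running counts $(m_s,n_s)$ of \emph{this} ball. Against the fixed constant $f(\bx_s)\in[0,1]$ this is exactly a one-dimensional sequential absolute-loss prediction problem, for which the empirical-frequency forecaster clipped at width $\gamma_s=\frac{1}{2\sqrt{n_s+2}}$ enjoys regret $O(\sqrt{k_s})$ against the best constant, hence against $f(\bx_s)$. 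Summing over the at most $\abs{\sS_i}$ balls, Cauchy--Schwarz gives $\sum_s\sqrt{k_s}\le\sqrt{\abs{\sS_i}\,n_i}$ (as $\sum_s k_s=n_i$), and plugging the cover bound $\abs{\sS_i}\le\hat{C}\,4^{d_i}\epsilon^{-d_i}$ from Lemma~\ref{lem:di} with the end-of-phase radius $\epsilon\approx n_i^{-1/(2+d_i)}$ produces a bound of order $\sqrt{\hat{C}}\,2^{d_i}\,n_i^{(1+d_i)/(2+d_i)}$, matching the first term once constants are tracked.

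The main obstacle is the per-ball step: establishing that the specific forecaster of lines~5--8 has $O(\sqrt{k_s})$ absolute-loss regret against any fixed constant, and checking that the clipping width $\gamma_s$ is precisely the calibration delivering the $\sqrt{k_s}$ rate rather than a worse one. A secondary but necessary piece of bookkeeping is to pin down the radius schedule so that the \emph{same} exponent $n_i^{(1+d_i)/(2+d_i)}$ emerges from both the telescoped bias sum and, through Lemma~\ref{lem:di}, the cover size; the opening examples must also be verified to be genuinely lower order, since they total $\abs{\sS_i}\le\sqrt{\abs{\sS_i}\,n_i}$ and are thus absorbed into the variance term.
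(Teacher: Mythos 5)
Your proposal is correct and follows essentially the same route as the paper: the same per-ball bias/variance-style decomposition (your pivot $f(\bx_s)$ is equivalent to the paper's pivot through the best in-ball constant $p_s^*$, via $\sum_t \ell_t(p_s^*)\le\sum_t \ell_t\bigl(f(\bx_s)\bigr)$), the same Lipschitz bound $L\epsilon_t$ with the integral estimate $\sum_{t\in T_i}\epsilon_t\le 1.5\,n_i^{(1+d_i)/(2+d_i)}$, and the same Cauchy--Schwarz aggregation $\sum_s\sqrt{k_s}\le\sqrt{\abs{\sS_i}\,n_i}$ combined with the cover bound of Lemma~\ref{lem:di}. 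The one step you flag as the main obstacle --- the $O(\sqrt{k_s})$ absolute-loss regret of the clipped Laplace forecaster against any fixed constant --- is precisely what the paper discharges by citing Feder, Merhav, and Gutman~\cite{FederMG92} (see also Exercise~8.8 of~\cite{cesa2006prediction}), with the first prediction in a ball coming from a neighbouring ball costing an additional $0.5$, yielding the per-ball bound $\sqrt{n(\bx_s)+1}+1\le 2.5\sqrt{n(\bx_s)}$ that you treat slightly more crudely (at most $1$ for the opening point) without affecting the final constant.
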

\begin{proof}
	We use the notation $\bx_t \rightarrow \bx_s$ to say that $\bx_t$ is assigned to a ball with center $\bx_s$.
	We also denote by $n(\bx_s)$ the number of points assigned to a ball of center $\bx_s$.
	Define 
	\[
	p_s^* = \argmin_{p \in [0,1]} \sum_{t \,:\, \bx_t \rightarrow \bx_s} \ell_t(p).
	\]
	For each $\bx_s$ in $\sS_i$, we proceed by upper bounding the error as a sum of two components
	\begin{align*}
	\nonumber
	\sum_{t\,:\,\bx_t \rightarrow \bx_s} & \Bigl(\ell_t(p_t) - \ell_t\bigl(f(\bx_t)\bigr)\Bigr)
	=
	\sum_{t\,:\,\bx_t \rightarrow \bx_s} \bigl(\ell_t(p_t) - \ell_t(p_s^*)\bigr)
	\\ &+
	\sum_{t\,:\,\bx_t \rightarrow \bx_s} \Bigl(\ell_t(p_s^*) - \ell_t\bigl(f(\bx_t)\bigr) \Bigr).
	\label{eq:bias_variance}
	\end{align*}
	Using the definition of $p_s^*$ and the Lipschitz property of $f$, we have
	\begin{align*}
	\ell_t(p_s^*) &- \ell_t(f(\bx_t))
	\leq
	\ell_t(f(\bx_s)) - \ell_t(f(\bx_t))  
	\\&
	\leq |f(\bx_s) - f(\bx_t)| \leq L\,\distc{\bx_s,\bx_t} \leq L\,\epsilon_t ~.
	\end{align*}
	The prediction strategy in each ball is equivalent to the approach followed in \cite{FederMG92} (see also Exercise 8.8 in \cite{cesa2006prediction}).
	The only important thing to note is that the first prediction of the algorithm in a ball is made using the probability of the closest ball, even if it is further than $\epsilon_t$, instead of at random as in the original strategy in \cite{FederMG92}. It is easy to see that this adds an additional $0.5$ to the regret stated in \cite{FederMG92}. So we have
	\begin{align*}
	\sum_{t\,:\,\bx_t \rightarrow \bx_s} \left(\ell_t(p_t) - \ell_t(p_s^*)\right) \leq \sqrt{n({\bx_s})+1}+1 \leq 2.5\sqrt{n({\bx_s})}~.
	\end{align*}
	Hence overall we have
	\begin{align*}
	\sum_{t\,:\,\bx_t \rightarrow \bx_s} \Bigl(\ell_t(p_t) - \ell_t\bigl(f(\bx_t)\bigr)\Bigr)
	\leq 2.5\sqrt{n({\bx_s})} + L \sum_{t\,:\,\bx_t \rightarrow \bx_s} \epsilon_t~.
	\end{align*}
	Summing over all the $\bx_s \in \sS_i$,  we have
	\begin{align*}
	&\sum_{t\in T_i} \Bigl(\ell_t(p_t) - \ell_t\bigl(f(\bx_t)\bigr)\Bigr) \\
	&\quad \leq 2.5\sum_{s=1}^{|\sS_i|} \sqrt{n({\bx_s})} + L \sum_{t\in T_i} \epsilon_t \\
	&\quad \leq  2.5 |\sS_i| \sqrt{\frac{1}{|\sS_i|}\sum_{s=1}^{|\sS_i|} n({\bx_s}) } + L \sum_{t\in T_i} \epsilon_t \\
	&\quad = 2.5 \sqrt{|\sS_i| n_i} + L \sum_{t\in T_i} \epsilon_t.
	\end{align*}
	To bound $|\sS_i|$ we use Lemma~\ref{lem:di}, while to bound the last term, we have
	\begin{align*}
	\sum_{t\in T_i}\epsilon_t 
	&= \sum_{t=1}^{n_i} t^{-\frac{1}{2+d_i}} 
	\leq \int_{0}^{n_i} \tau^{-\frac{1}{2+d_i}}\, d\tau
	= \frac{d_i+2}{d_i+1} n_i^{\frac{d_i+1}{2+d_i}} \\
	&\leq 1.5 n_i^{\frac{d_i+1}{2+d_i}}
	\end{align*}
	where $n_i = |T_i|$.
	Overall we have
	\begin{align*}
	\sum_{t\in T_i} & \Bigl(\ell_t(p_t) - \ell_t\bigl(f(\bx_t)\bigr)\Bigr) \\
	&\leq 2.5 \sqrt{\hat{C}} \,2^{d_i} n_i^{\frac{d_i}{2(2+d_i)}+\frac{1}{2}} + 1.5 L n_i^{\frac{d_i+1}{2+d_i}}\\
	& = (2.5\sqrt{\hat{C}} \,2^{d_i} + 1.5 L) n_i^{\frac{1+d_i}{2+d_i}} \\
	& \le (2.5\sqrt{\hat{C}} \,2^{d} + 1.5 L) n_i^{\frac{1+d}{2+d}}~.
	\end{align*}
\end{proof}
We finish with the proof of Theorem~\ref{theo:regret}.
\begin{proof}
	Let $I$ denote the number of phases up to time $T$. 
	Let $B \triangleq 2.5\sqrt{\hat{C}} \,2^{d} + 1.5 L$.
	We use Lemma~\ref{lemma:one_epoch} in each phase and sum over the phases, to have
	\begin{align*}
	&\sum_{t=1}^T \left(\ell_t(p_t) - \ell_t(p_s^*)\right)
	= \sum_{i = 1}^I \sum_{t\in T_i} \left(\ell_t(p_t) - \ell_t(p_s^*)\right) \\
	&\quad \leq B\sum_{i = 1}^I n_i^{\frac{1+d}{2+d}}
	= B\, I \sum_{i = 1}^I \frac{1}{I} n_i^\frac{1+d}{2+d}
	\leq  B\, I \paren{\sum_{i = 1}^I \frac{n_i}{I}}^\frac{1+d}{2+d} \\
	&\quad = B\, I \paren{ \frac{T}{I}}^\frac{1+d}{2+d}
	\leq B d^\frac{1}{2+d} T^{\frac{1+d}{2+d}}
	\leq 1.26 B\, T^{\frac{1+d}{2+d}}
	\end{align*}
	where in the second inequality we use Jensen's inequality, and in the second to last inequality the first statement of Lemma~\ref{lem:di}.
\end{proof}

\subsection{The BASE algorithm with ball adjustment}
\begin{algorithm}[t]
	\caption{BASE-ADJ (BASE with ball adjustment)}
	\label{alg:base-adj}
	\begin{algorithmic}[1]
		\Require $\hat{C}$ (space diameter)
		\Procedure{\texttt{OuputPrediction}}{$\mathcal{B}_s$}
		\State $n_s = n_s(1)+\cdots+n_s(K)$ \Comment{total class counts}
		\State $p_s(k) = \frac{n_s(k)}{n_s} \qquad k=1,\dots,K$
		\State Predict ${\displaystyle \yhat_t = \argmax_{k\in\sY} p_s(k)}$
		\EndProcedure
		\Procedure{\texttt{UpdateBallInformation}}{$\mathcal{B}_s,(\bx_t,y_t)$}
		\State $//$ update ball centre on correct prediction
		\If{${\displaystyle y_t = \yhat_t}$}
		\State $\Delta=\bx_t-\bx_s; n_s=n_s+1;$  
		\State $\bx_s=\bx_s+\Delta/n_s$
		\EndIf
		\State Updates label counts $n_s(1),\ldots,n_s(K)$ in the ball $\mathcal{B}_s$ using $y_t$
		\EndProcedure
	\end{algorithmic}
\end{algorithm}
A natural way of generalizing the BASE algorithm to the multiclass case is by estimating the class probabilities in each ball. Note that this approach is naturally incremental w.r.t.\ the number of classes: new bins for counting are created on the fly as data points of new classes arrive.

Recall that the BASE algorithm greedly covers the input space. In particular, balls are always centered on input points. However, constraining the centers on data points is an intuitively sub-optimal strategy: it might be possible to cover the same region with a smaller number of balls if we could freely move their centers. As a full optimization of the position of the centers is not realistic in a streaming scenario, we introduce the BASE-ADJ variant which makes a partial optimization by using a step of the K-means algorithm~\cite{macqueen1967some}. More precisely, BASE-ADJ (Algorithm~\ref{alg:base-adj}, only the main changes w.r.t.\ BASE are shown) moves the center of each ball towards the average of the correct classified data points falling into it. In this way, the center of the ball tends to move towards the centroid of a cluster of points of a certain class. We expect this variant to generate less balls and also to have a better empirical performance.

We drop from BASE-ADJ the Laplace correction of class estimates and the randomization in the computation of the predicted label. Although these ingredients were used in the theoretical analysis, we noticed that they do not significantly affect the empirical results. Hence, BASE-ADJ always predicts the class with the largest class probability estimate (majority voting on the collected labels) within the ball closest to the current data point.

\subsection{The AUTO algorithm: automatic radius}

\begin{algorithm}[t]
	\caption{AUTO and AUTO-ADJ}
	\label{alg:auto}
	\begin{algorithmic}[1]
		\Require $\hat{d}$
		\Procedure{\texttt{InitProcedure}}{}
		\State $//$ wait until at least two different labels fed
		\If{$\sS \equiv \emptyset$}
		\State $\sS = \{\bx_1\}$ and initialize label counts
		\ElsIf {$y_t \neq y_1$} \Comment{$|\sS|=1$}
		\State $\sS = \sS \cup \{\bx_t\},\ve_1=\ve_t=\rho(\bx_1,\bx_t)$  \State Initialize label counts
		\Else
		\State \textbf{continue}
		\EndIf
		\EndProcedure
		\Procedure{\texttt{OuputPrediction}}{$\mathcal{B}_s$}
		\State $n_s = n_s(1)+\cdots+n_s(K)$ \Comment{total class counts}
		\State $p_s(k) = \frac{n_s(k)}{n_s} \qquad k=1,\dots,K$
		\State Predict ${\displaystyle \yhat_t = \argmax_{k\in\sY} p_s(k)}$
		\EndProcedure
		\Procedure{\texttt{UpdateBallInformation}}{$\mathcal{B}_s,(\bx_t,y_t)$}
		\State $//$ shrink radius on errors
		\If{${\displaystyle y_t \neq \yhat_t}$}
		\State Set $m_s = m_s + 1$ \Comment{update mistakes count}
		\ElsIf {AUTO-ADJ method}
		\State $//$ update ball centre if correct prediction
		\State $\Delta=\bx_t-\bx_s; u_s=u_s+1;$  
		\State $\bx_s=\bx_s+\Delta/u_s$ 
		\EndIf
		\State Updates label counts $n_s(1),\ldots,n_s(K)$ in the ball $\mathcal{B}_s$ using $y_t$
		\EndProcedure
		\Procedure{\texttt{AddNewBall}}{$\sS,\bx_s,(\bx_t,y_t)$}
		\State $\sS = \sS \cup \{\bx_t\}$, $R_t = \rho(\bx_t,\bx_s)$
		\State $m_t=0$ \Comment{ball mistakes count}
		\State $u_t=1$ \Comment{center updates count (for AUTO-ADJ)}
		\State Initialize label counts $n_s(1),\ldots,n_s(K)$ in the ball $\mathcal{B}_t$ using $y_t$
		\EndProcedure
		\Procedure{\texttt{UpdateEpsilon}}{$\mathcal{B}_s$}
		\State $//$ radius dependent on mistakes
		\State $\ve_s = R_s\, m_s^{-1/(2+\hat{d})}$
		\EndProcedure
	\end{algorithmic}
\end{algorithm}

One of the biggest issues with BASE (and ITBR) is the use of a common radius for all the balls.
In fact, in line 28 of Algorithm~\ref{alg:base} we have that the radii $\ve_s$ shrink uniformly with time $t$ at rate $t^{-1/(d_i+2)}$, where $d_i$ is the estimated metric dimension. However, we would like the algorithm to use smaller balls in regions of the input space where labels are more irregularly distributed and bigger balls in easy regions, where labels tend to be the same. 

In order to overcome this issue, in this section we introduce two other instances of ABACOC: AUTO and AUTO-ADJ. In these variants we let the radius of each ball shrink at a rate depending on the number of mistakes made by each local ball classifier, lines 20 and 36 in Algorithm~\ref{alg:auto}. Moreover, in order to get rid of the parameter $\hat{C}$ used to estimate the metric dimension, we initialize the radius of each ball to the distance to its closest ball, line 29 in Algorithm~\ref{alg:auto}. In other words, everytime a new ball is added its radius is set equal to the distance to the nearest already-existing ball.

AUTO-ADJ differs from AUTO because it implements the same strategy, introduced in BASE-ADJ, for updating the position of the centers. Note that this strategy, coupled with the shrinkage depending on the number of mistakes, makes a ball stationary once it is covering a region of the space that contains data points always annotated with the same label.

Using balls of different radii makes it impossible to work with the automatic estimate of the metric dimension used in BASE, BASE-ADJ and ITBR. For this reason, we further simplify the algorithms by resorting to a fixed estimate $\hat{d}$ of the intrinsic dimension $d$ as an input parameter.

\section{Experiments}
\label{sc:exp}

In this section, we describe baselines and datasets used in the experiments and report on the obtained results.
We conducted an extensive evaluation on standard machine learning datasets for the streaming setting. Generally, in real applications for high-speed data streams, when the system cannot afford to revise the current model after each observation of a data point, stream sub-sampling is used to keep the model size and the prediction efficiency under control. 
In order to emphasize the distinctive features of our approaches (i.e., good trade-off between accuracy and model size), we tested the online (prequential) performance using sub-sampling ---see Algorithm~5. In this setting, the algorithms have access to each true class label only with a certain probability. By varying this probability, we can explore different model sizes for each baseline algorithm and compare the resulting performances. Note also that, while in this work we only consider random sub-sampling, different and more active sampling schedules could be also envisioned. 

\subsection{Baseline and datasets}
\label{sc:base_data}
We considered eleven popular datasets for stream mining
listed in Table~\ref{tab:data}. 

\begin{table}[h]
	\begin{center}
		\begin{tabular}{@{}lccccc@{}}
			\toprule
			\textbf{Data}  & \textbf{Cls} & \textbf{Dim} & \textbf{Examples} & \textbf{Drift} & \textbf{Source} \\ \midrule
			sensor           & 54             & 5                & 2,219,803                 & no & SDMR          \\
			kddcup99   & 23               & 41                 & 494,021                  & no & SDMR          \\
			powersupply          & 24               & 2                & 29,928                  & yes & SDMR          \\
			hyperPlane           & 5               & 10                & 100,000                   & yes & SDMR          \\
			sea      & 2               & 3                 & 60,000                   & yes & DF       \\
			poker          & 10               & 10                 & 25,010                  & no & MOA             \\
			covtype        & 7                & 54                 & 581,012                 & yes & MOA             \\
			airlines       & 2                & 608                & 539,383                 & yes & MOA             \\
			electricity    & 2                & 8                  & 45,312                  & yes & MOA             \\
			connect-4      & 3                & 126                & 67,557                  & no & LIBSVM          \\
			acoustic       & 3                & 50                 & 78,823                  & no & LIBSVM          \\
			\bottomrule
		\end{tabular}
	\end{center}
	\caption{Datasets used for benchmarking.}
	\label{tab:data}
\end{table}
As indicated in the table, datasets are from the Stream Data Mining repository (SDMR)~\cite{SDMR}, the Data Sets with Concept Drift repository (DF)~\cite{DF}, the Massive Online Analysis (MOA) collection\footnote{\url{moa.cms.waikato.ac.nz/datasets/}}, and the LIBSVM classification repository\footnote{\url{www.csie.ntu.edu.tw/~cjlin/libsvmtools/datasets/}}. In all experiments, we measured the \textsl{online accuracy} (prequential error in~\cite{gama2013evaluating} or ``Interleaved Test-Then-Train'' validation in MOA\footnote{\url{moa.cms.waikato.ac.nz/}}). This is the average performance when each new example in the stream is predicted using the classifier trained only over the past examples in the stream ---see Algorithm~5 (line 6).

\begin{algorithm}[t]                   
	\caption{Online sub-sampling evaluation protocol}          
	\label{alg:online-prot}  
	\begin{algorithmic}[1]                     
		\Require  $\texttt{rate}$, Stream $(\bx_1,y_1),(\bx_2,y_2),\ldots$
		\State Initialize online accuracy $M_0 = 0$
		\For{$t=1,2,\ldots$}
		\State Receive instance $\bx_t$ from stream
		\State Compute class label prediction $\yhat_t$
		\State Receive true class label $y_t$ 
		\State Update $M_t = \bigl(1-\tfrac{1}{t}\bigr)M_{t-1} + \tfrac{1}{t}\mathbb{I}\{\yhat_t = y_t\}$
		\If {$\texttt{rand()}<\texttt{rate}$}
		\State Update model with new example $(\bx_t,y_t)$
		\EndIf 	
		\EndFor
	\end{algorithmic}
\end{algorithm}

In a pre-processing phase, the categorical attributes were binarized.
BASE and BASE-ADJ received normalized input instances (Euclidean norm) allowing the input parameter $\hat{C}$ (space diameter) to be set to 1.
We compared our ABACOC methods BASE\footnote{We used the multiclass version as for BASE-ADJ.} (Algorithm~2), BASE-ADJ (Algorithm~3), AUTO and AUTO-ADJ (Algorithm~4) against some of the most popular incremental nonparametric baselines (see Section~\ref{sc:related}) in the stream mining literature: K-NN with parameter $K=3$ (NN3) (see next paragraph for a justification of this choice), Condensed Nearest Neighbor~\cite{wilson2000reduction} (CNN), a streaming version of NN which only stores mistaken points, the multiclass Perceptron with Gaussian kernel~\cite{crammer2003ultraconservative} (K-PERC), a decision tree algorithm for streaming data~\cite{domingos2000mining} (VDFT), and a recent algorithm for learning decision rules on streaming data~\cite{gama2011learning} (RULES). For VDFT and RULES we used the implementation available in MOA, while K-PERC was run using the code in DOGMA~\cite{Orabona09}. The ABACOC algorithms were implemented in MATLAB\footnote{code available at \url{http://mloss.org/software/view/560/}.}. We did not consider the L-Tree~\cite{zhang2011enabling} and IBLStreams~\cite{shaker2012iblstreams} methods described in Section~\ref{sc:related} as L-Tree is an efficient approximation of NN (outperformed by NN, see~\cite{zhang2011enabling}) and IBLStreams never performs better than RULES (both implemented in MOA) on our datasets.

Where necessary, the parameters of the competitor methods were individually tuned on each dataset using an algorithm-specific grid of values in order to obtain the best online performance. Hence, the results of the competitors are not worse than the ones obtainable with a tuning of the parameters using standard cross-validation methods. For our methods, we used the Euclidean distance as metric $\rho$. Based on preliminary experiments, we noticed that the parameter $\hat{d}$ does not affect significantly the performance in AUTO and AUTO-ADJ, so we set it to $2$. With $\hat{d}$ fixed to this value, our methods are essentially parameterless, which is a very attractive feature in a streaming setting where cross-validation can not be easily applied.
\begin{figure*}[t]
	\subfigure[BASE]{
		\centering
		\includegraphics[width=0.22\linewidth]{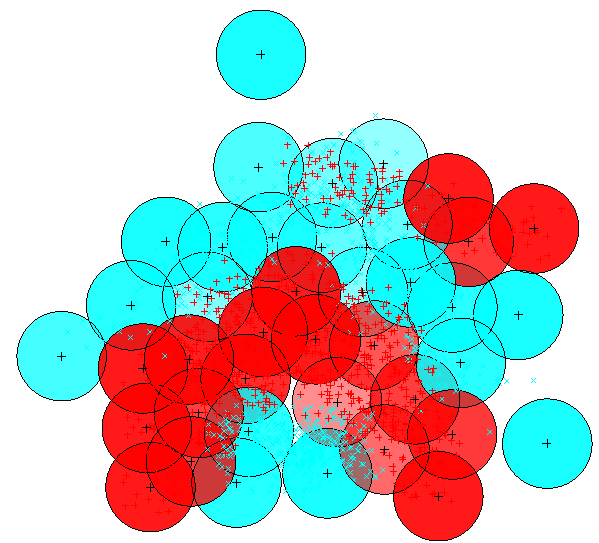}
	}
	\subfigure[BASE-ADJ]{
		\centering
		\includegraphics[width=0.22\linewidth]{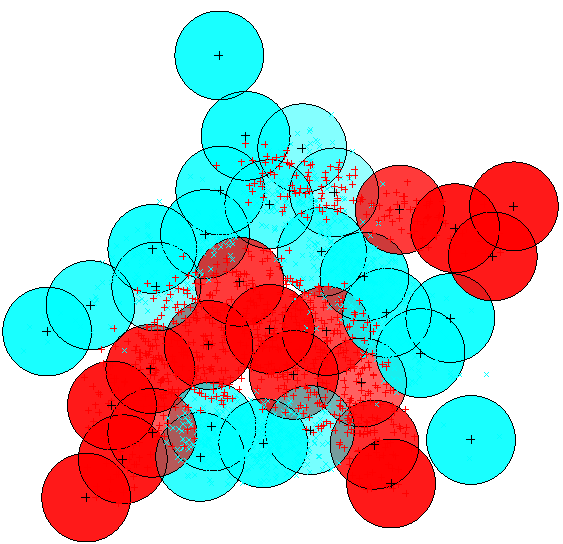}
	}
	\subfigure[AUTO]{
		\centering
		\includegraphics[width=0.22\linewidth]{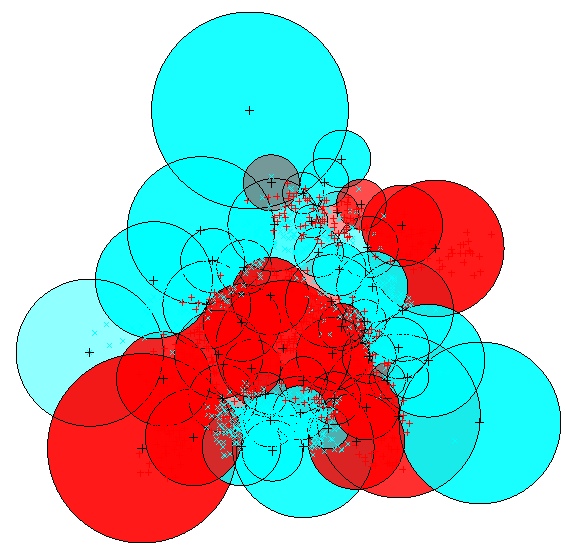}
	}
	\subfigure[AUTO-ADJ]{
		\centering
		\includegraphics[width=0.22\linewidth]{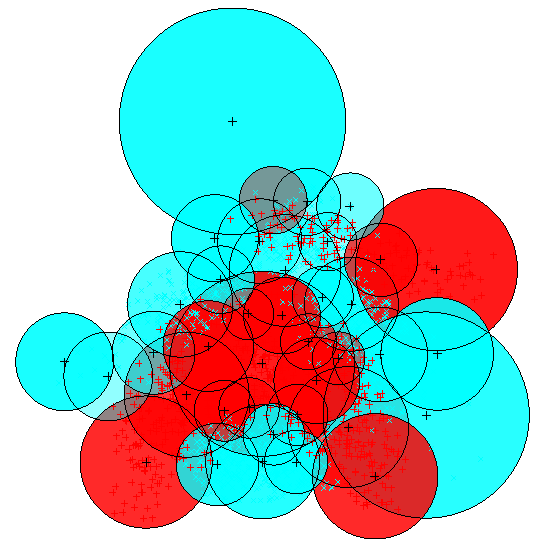}
	}\hfill                 
	\caption{Empirical behaviours of all versions of ABACOC algorithm on 2000 datapoints of the \emph{banana} dataset.
		The intensity of the colour of each ball is proportional to the conditional class probability of the two classes.
		\label{fig:ball_comp}
	}
\end{figure*}
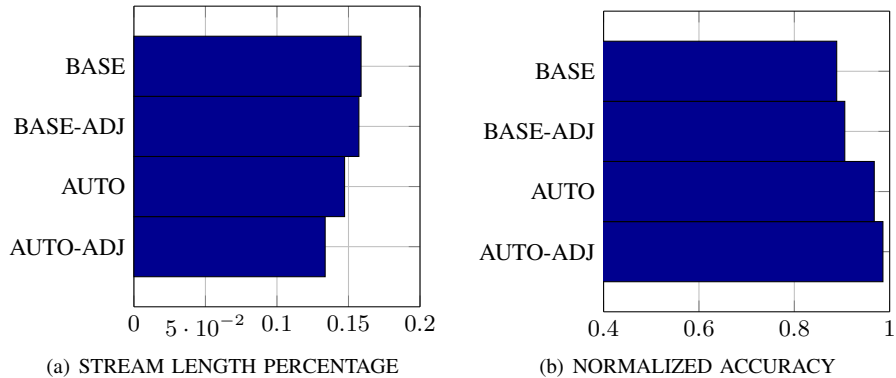
\begin{figure*}[t]
	\centering
	\subfigure[STREAM LENGTH PERCENTAGE]{
		\centering
%
%
\definecolor{mycolor1}{rgb}{0.00000,0.00000,0.56250}%
\begin{tikzpicture}

\begin{axis}[%
width=0.95092\figurewidtha,
height=\figureheighta,
at={(0\figurewidtha,0\figureheighta)},
scale only axis,
separate axis lines,
every outer x axis line/.append style={black},
every x tick label/.append style={font=\color{black}},
xmin=0,
xmax=0.2,
xmajorgrids,
xminorgrids,
every outer y axis line/.append style={black},
every y tick label/.append style={font=\color{black}},
ymin=0,
ymax=5,
ytick={1,2,3,4},
yticklabels={{AUTO-ADJ},{AUTO},{BASE-ADJ},{BASE}},
ymajorgrids,
yminorgrids
]

\addplot[area legend,solid,fill=mycolor1,draw=black,forget plot]
table[row sep=crcr] {%
x	y\\
0	0.5\\
0.133699375001397	0.5\\
0.133699375001397	1.5\\
0	1.5\\
}--cycle;

\addplot[area legend,solid,fill=mycolor1,draw=black,forget plot]
table[row sep=crcr] {%
x	y\\
0	1.5\\
0.147274938721572	1.5\\
0.147274938721572	2.5\\
0	2.5\\
}--cycle;

\addplot[area legend,solid,fill=mycolor1,draw=black,forget plot]
table[row sep=crcr] {%
x	y\\
0	2.5\\
0.157274938721572	2.5\\
0.157274938721572	3.5\\
0	3.5\\
}--cycle;

\addplot[area legend,solid,fill=mycolor1,draw=black,forget plot]
table[row sep=crcr] {%
x	y\\
0	3.5\\
0.158796358137774	3.5\\
0.158796358137774	4.5\\
0	4.5\\
}--cycle;

\end{axis}
\end{tikzpicture}%
	}
	\subfigure[NORMALIZED ACCURACY]{
		\centering
%
%
\definecolor{mycolor1}{rgb}{0.00000,0.00000,0.56250}%
\begin{tikzpicture}

\begin{axis}[%
width=0.95092\figurewidtha,
height=\figureheighta,
at={(0\figurewidtha,0\figureheighta)},
scale only axis,
separate axis lines,
every outer x axis line/.append style={black},
every x tick label/.append style={font=\color{black}},
xmin=0.4,
xmax=1,
xmajorgrids,
xminorgrids,
every outer y axis line/.append style={black},
every y tick label/.append style={font=\color{black}},
ymin=0,
ymax=5,
ytick={1,2,3,4},
yticklabels={{AUTO-ADJ},{AUTO},{BASE-ADJ},{BASE}},
ymajorgrids,
yminorgrids
]

\addplot[area legend,solid,fill=mycolor1,draw=black,forget plot]
table[row sep=crcr] {%
x	y\\
0	0.5\\
0.985820239222946	0.5\\
0.985820239222946	1.5\\
0	1.5\\
}--cycle;

\addplot[area legend,solid,fill=mycolor1,draw=black,forget plot]
table[row sep=crcr] {%
x	y\\
0	1.5\\
0.967612800640985	1.5\\
0.967612800640985	2.5\\
0	2.5\\
}--cycle;

\addplot[area legend,solid,fill=mycolor1,draw=black,forget plot]
table[row sep=crcr] {%
x	y\\
0	2.5\\
0.90582942545619	2.5\\
0.90582942545619	3.5\\
0	3.5\\
}--cycle;

\addplot[area legend,solid,fill=mycolor1,draw=black,forget plot]
table[row sep=crcr] {%
x	y\\
0	3.5\\
0.888895063124599	3.5\\
0.888895063124599	4.5\\
0	4.5\\
}--cycle;

\end{axis}
\end{tikzpicture}%
	}
	\caption{Model size and online performance averaged over all datasets in Table~\ref{tab:data} of our four methods. Performances are computed by normalizing each performance relative to the best performer for each dataset, and then averaging over the datasets.
		\label{fig:bin}
	}
\end{figure*}

\subsection{Comparison among our methods}
First, we compared the empirical behaviour of all our algorithms on the two-dimensional dataset \emph{banana},\footnote{\url{http://mldata.org/repository/data/viewslug/banana-ida/}} in Figure~\ref{fig:ball_comp}. The simplicity of this dataset allows us to show visually the difference between the four algorithms. BASE is seen to have many overlapping balls. On the other hand, AUTO has balls of different radii and not so overlapping. Finally, BASE-ADJ and AUTO-ADJ, the variants of BASE and AUTO that update the centers of the balls, have a smaller number of balls than BASE and AUTO respectively.
Also, note how the use of a varying shrinking radius in AUTO and AUTO-ADJ results in bigger balls that cover very large regions of the space.
To verify the intuition emerged from Figure~\ref{fig:ball_comp}, we empirically tested the performance of our methods on the entire benchmark of Table~\ref{tab:data}, running Algorithm~5 with $\texttt{rate}=1$. In Figure~\ref{fig:bin}(a), we show the resulting model sizes in terms of the stream length percentage used to represent the models (fraction of input samples used as ball centers) of each method averaged over all datasets in our benchmark suite. Figure~\ref{fig:bin}(b) shows the average normalized accuracy of each method as a fraction of the accuracy of the best-performing method on each dataset. Note that, due to the adjustment procedure added to BASE-ADJ and AUTO-ADJ, they use a small fraction of data to represent their models while achieving a performance better than, respectively, BASE and AUTO. Finally, we observe that AUTO-ADJ simultaneously achieves the smallest model and the best performance.

\subsection{Comparison against baselines}
\label{sec:base_comp}
We now turn to describing the sub-sampling experiments. In a streaming setting, the model size and thus the computational efficiency of the prediction system is a key feature. The goal of the experiments is to show the trade-off between online performance and model size for each algorithm.
The model size is measured by: the number of balls used to cover the feature space (ABACOC), the number of stored instances (K-PERC, NN, CNN), the number of leaves (VFDT) or rules (RULES) used to partition the feature space.
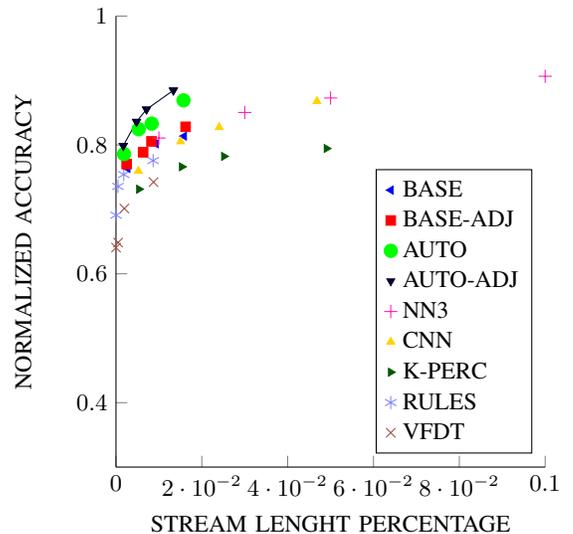
\begin{figure}[t]
	\centering
%
%
\definecolor{mycolor1}{rgb}{0.00000,0.00000,0.17241}%
\definecolor{mycolor2}{rgb}{1.00000,0.10345,0.72414}%
\definecolor{mycolor3}{rgb}{1.00000,0.82759,0.00000}%
\definecolor{mycolor4}{rgb}{0.00000,0.34483,0.00000}%
\definecolor{mycolor5}{rgb}{0.51724,0.51724,1.00000}%
\definecolor{mycolor6}{rgb}{0.62069,0.31034,0.27586}%
\begin{tikzpicture}

\begin{axis}[%
width=0.95092\figurewidthb,
height=\figureheightb,
at={(0\figurewidthb,0\figureheightb)},
scale only axis,
every outer x axis line/.append style={black},
every x tick label/.append style={font=\color{black}},
xmin=0,
xmax=0.1,
xlabel={STREAM LENGHT PERCENTAGE},
every outer y axis line/.append style={black},
every y tick label/.append style={font=\color{black}},
ymin=0.3,
ymax=1,
ylabel={NORMALIZED ACCURACY},
axis x line*=bottom,
axis y line*=left,
legend style={at={(0.97,0.03)},anchor=south east,legend cell align=left,align=left,draw=black}
]
\addplot [color=blue,mark size=1.7pt,only marks,mark=triangle*,mark options={solid,rotate=90,fill=blue}]
  table[row sep=crcr]{%
0.0158796358137774	0.813662777381809\\
};
\addlegendentry{BASE};

\addplot [color=red,mark size=1.8pt,only marks,mark=square*,mark options={solid,fill=red}]
  table[row sep=crcr]{%
0.0162236780436824	0.828232601166835\\
};
\addlegendentry{BASE-ADJ};

\addplot [color=green,mark size=2.5pt,only marks,mark=*,mark options={solid,fill=green}]
  table[row sep=crcr]{%
0.0157274938721572	0.869284645377653\\
};
\addlegendentry{AUTO};

\addplot [color=mycolor1,mark size=1.7pt,only marks,mark=triangle*,mark options={solid,rotate=180,fill=mycolor1}]
  table[row sep=crcr]{%
0.0133699375001397	0.884799792385142\\
};
\addlegendentry{AUTO-ADJ};

\addplot [color=mycolor2,mark size=2.5pt,only marks,mark=+,mark options={solid,fill=mycolor2}]
  table[row sep=crcr]{%
0.1	0.906635804824855\\
};
\addlegendentry{NN3};

\addplot [color=mycolor3,mark size=1.7pt,only marks,mark=triangle*,mark options={solid,fill=mycolor3}]
  table[row sep=crcr]{%
0.0468184706279372	0.867866194271921\\
};
\addlegendentry{CNN};

\addplot [color=mycolor4,mark size=1.7pt,only marks,mark=triangle*,mark options={solid,rotate=270,fill=mycolor4}]
  table[row sep=crcr]{%
0.0490601952807886	0.794570037908928\\
};
\addlegendentry{K-PERC};

\addplot [color=mycolor5,mark size=2.5pt,only marks,mark=asterisk,mark options={solid,fill=mycolor5}]
  table[row sep=crcr]{%
0.0086573519712239	0.775774933451997\\
};
\addlegendentry{RULES};

\addplot [color=mycolor6,mark size=2.5pt,only marks,mark=x,mark options={solid,fill=mycolor6}]
  table[row sep=crcr]{%
0.00874578247625048	0.742360700867513\\
};
\addlegendentry{VFDT};

\addplot [color=blue,mark size=1.7pt,only marks,mark=triangle*,mark options={solid,rotate=90,fill=blue},forget plot]
  table[row sep=crcr]{%
0.00928693137251816	0.801794724305936\\
};
\addplot [color=red,mark size=1.8pt,only marks,mark=square*,mark options={solid,fill=red},forget plot]
  table[row sep=crcr]{%
0.0082787791465653	0.805566364014253\\
};
\addplot [color=green,mark size=2.5pt,only marks,mark=*,mark options={solid,fill=green},forget plot]
  table[row sep=crcr]{%
0.00831493578721217	0.833169376810335\\
};
\addplot [color=mycolor1,mark size=1.7pt,only marks,mark=triangle*,mark options={solid,rotate=180,fill=mycolor1},forget plot]
  table[row sep=crcr]{%
0.00711201451614163	0.855287997611218\\
};
\addplot [color=mycolor2,mark size=2.5pt,only marks,mark=+,mark options={solid,fill=mycolor2},forget plot]
  table[row sep=crcr]{%
0.05	0.872887209474872\\
};
\addplot [color=mycolor3,mark size=1.7pt,only marks,mark=triangle*,mark options={solid,fill=mycolor3},forget plot]
  table[row sep=crcr]{%
0.0240456351341686	0.827564930207116\\
};
\addplot [color=mycolor4,mark size=1.7pt,only marks,mark=triangle*,mark options={solid,rotate=270,fill=mycolor4},forget plot]
  table[row sep=crcr]{%
0.0251479143539507	0.782328080473603\\
};
\addplot [color=mycolor5,mark size=2.5pt,only marks,mark=asterisk,mark options={solid,fill=mycolor5},forget plot]
  table[row sep=crcr]{%
0.0018114207160928	0.754101872302473\\
};
\addplot [color=mycolor6,mark size=2.5pt,only marks,mark=x,mark options={solid,fill=mycolor6},forget plot]
  table[row sep=crcr]{%
0.0019506265277739	0.701420393602566\\
};
\addplot [color=blue,mark size=1.7pt,only marks,mark=triangle*,mark options={solid,rotate=90,fill=blue},forget plot]
  table[row sep=crcr]{%
0.00669922394700927	0.788817150773859\\
};
\addplot [color=red,mark size=1.8pt,only marks,mark=square*,mark options={solid,fill=red},forget plot]
  table[row sep=crcr]{%
0.00632476439370535	0.788719204384367\\
};
\addplot [color=green,mark size=2.5pt,only marks,mark=*,mark options={solid,fill=green},forget plot]
  table[row sep=crcr]{%
0.00528172098591453	0.824447682954128\\
};
\addplot [color=mycolor1,mark size=1.7pt,only marks,mark=triangle*,mark options={solid,rotate=180,fill=mycolor1},forget plot]
  table[row sep=crcr]{%
0.00477804532125264	0.836187876405584\\
};
\addplot [color=mycolor2,mark size=2.5pt,only marks,mark=+,mark options={solid,fill=mycolor2},forget plot]
  table[row sep=crcr]{%
0.03	0.850471120421857\\
};
\addplot [color=mycolor3,mark size=1.7pt,only marks,mark=triangle*,mark options={solid,fill=mycolor3},forget plot]
  table[row sep=crcr]{%
0.0150387968473564	0.805350521258591\\
};
\addplot [color=mycolor4,mark size=1.7pt,only marks,mark=triangle*,mark options={solid,rotate=270,fill=mycolor4},forget plot]
  table[row sep=crcr]{%
0.0153259774340651	0.76613015301157\\
};
\addplot [color=mycolor5,mark size=2.5pt,only marks,mark=asterisk,mark options={solid,fill=mycolor5},forget plot]
  table[row sep=crcr]{%
0.000465196945481186	0.735694203264042\\
};
\addplot [color=mycolor6,mark size=2.5pt,only marks,mark=x,mark options={solid,fill=mycolor6},forget plot]
  table[row sep=crcr]{%
0.000493796134728456	0.648502213178515\\
};
\addplot [color=blue,mark size=1.7pt,only marks,mark=triangle*,mark options={solid,rotate=90,fill=blue},forget plot]
  table[row sep=crcr]{%
0.00271877178849802	0.763649009791074\\
};
\addplot [color=red,mark size=1.8pt,only marks,mark=square*,mark options={solid,fill=red},forget plot]
  table[row sep=crcr]{%
0.00248747924897035	0.770343793601516\\
};
\addplot [color=green,mark size=2.5pt,only marks,mark=*,mark options={solid,fill=green},forget plot]
  table[row sep=crcr]{%
0.00187243110481257	0.785947766197627\\
};
\addplot [color=mycolor1,mark size=1.7pt,only marks,mark=triangle*,mark options={solid,rotate=180,fill=mycolor1},forget plot]
  table[row sep=crcr]{%
0.00169876944075864	0.798865572849225\\
};
\addplot [color=mycolor2,mark size=2.5pt,only marks,mark=+,mark options={solid,fill=mycolor2},forget plot]
  table[row sep=crcr]{%
0.01	0.810757572226829\\
};
\addplot [color=mycolor3,mark size=1.7pt,only marks,mark=triangle*,mark options={solid,fill=mycolor3},forget plot]
  table[row sep=crcr]{%
0.00522725111883961	0.759686435985412\\
};
\addplot [color=mycolor4,mark size=1.7pt,only marks,mark=triangle*,mark options={solid,rotate=270,fill=mycolor4},forget plot]
  table[row sep=crcr]{%
0.00535352526259664	0.731363481186716\\
};
\addplot [color=mycolor5,mark size=2.5pt,only marks,mark=asterisk,mark options={solid,fill=mycolor5},forget plot]
  table[row sep=crcr]{%
2.05424435803479e-05	0.691000934039775\\
};
\addplot [color=mycolor6,mark size=2.5pt,only marks,mark=x,mark options={solid,fill=mycolor6},forget plot]
  table[row sep=crcr]{%
1.34113219080892e-05	0.640687628028311\\
};
\addplot [color=mycolor1,solid,mark size=1.7pt,mark=triangle*,mark options={solid,rotate=180,fill=mycolor1},forget plot]
  table[row sep=crcr]{%
0.00169876944075864	0.798865572849225\\
0.00477804532125264	0.836187876405584\\
0.00711201451614163	0.855287997611218\\
0.0133699375001397	0.884799792385142\\
};
\end{axis}
\end{tikzpicture}%
	\caption{
		Online performance against model size averaged over the datasets. The model size is relative to the stream length, whereas the online performance is measured relative to the top-performing method on each dataset without restriction on model size. 
		\label{fig:budget_summary}
	}
\end{figure}

We ran all the methods using values $\texttt{rate}=\{1\%,3\%,5\%,10\%\}$ and the same random seeds for all algorithms.\footnote{We remark that the $\texttt{rate}$ is only an upper bound on the model size. In fact, the methods can select a smaller fraction of data to represent the model.} In Figure~\ref{fig:budget_summary}, we plot the normalized online performance against model size, averaged over the datasets. The model size is relative to the stream length, whereas the online performance is measured relative to the top-performing method on each dataset without restriction on model size. 
As we can see from the plot, NN3 saturates the model size and achieves a slightly better overall performance on the larger model sizes. However, it suffers at low budget values and small model sizes. CNN works better than K-PERC and decision trees. VFDT and RULES use very little memory but have a worse performance than the other methods. BASE-ADJ improves on the performance of BASE. AUTO attains a better performance than BASE and AUTO-ADJ achieves the overall best trade-off between accuracy and model size. In fact, as we can see in Figure~\ref{fig:budget_summary}, the AUTO-ADJ curve dominates the other ones. Moreover, it attains $90\%$ of the best full-sampling methods while using only $1.5\%$ of the data to represent the model. Because of the better performance exhibited by our methods with respect to the baselines at the same model size values, we can infer that our methods have a better way of choosing the data points that define their models.



\section{Constant model size}
\label{sec:fix_bud}
In this section we propose a simple method for making the memory footprint bounded, even in the presence of an arbitrarily long data stream. When the model size reaches a given limit, the algorithm starts to discard the examples supporting the model that are judged to be less informative for the prediction task. More precisely, it is reasonable to discard the local classifiers that are making the largest number of mistakes. This happens essentially for two reasons: 1) the optimal decision surface in that region is complex and/or the noise rate is high; 2) there is concept drift~\cite{tsymbal2004problem}, that is the optimal decision surface is locally changing over time. Removing local classifiers with a high mistake rate may then help because: we are discarding classifiers that are making essentially random decisions; moreover, we make room for new classifiers that rely on fresh statistics (good in case of concept drift) and are possibly better positioned to capture a complex decision surface. Thus, in order to curb the memory footprint, we propose a simple approach based on deleting existing balls whenever a given budget parameter is attained. This is crucial for real-time applications, as NN search in the prediction phase is logarithmic on the number of balls. The probability of deleting any given ball is proportional to the number of mistakes made so far by the associated classifier.
Namely, after the budget is reached, whenever a new ball is added an existing ball $i$ is discarded according to the Laplace-corrected probability
\begin{equation} \label{eq:prob_del}
\Pr(i\,\text{discarded}) =  \frac{m_i + 1}{\sum_{j \in \sS}{m_j}+|\sS|}
\end{equation}
where $m_i$ is the number of mistakes made by ball $i \in \sS$. 
We run the experiments in the same setting of Section~\ref{sec:base_comp}, where we did not make any restriction on the sub-sampling rate ($\texttt{rate}=1$ in Algorithm~5). We added to AUTO-ADJ a constant model size bound. With respect to sub-sampling, here the algorithm has more control over the data points that support the model. We report in Table~\ref{tab:data_perf1} and in Table~\ref{tab:data_perf2} the performance with budget $10\%$ and $1\%$ of the method AUTO-ADJ with constant budget, called AUTO-ADJ FIX, compared to NN3 and AUTO-ADJ which performed the best in the previous experiments using the same final model sizes.  
\begin{table}[h]
	\begin{center}
		\begin{tabular}{@{}lccc@{}}
			\toprule
			\textbf{Data} & \textbf{NN3}       & \textbf{AUTO-ADJ}  & \textbf{AUTO-ADJ FIX} \\ \midrule
			kddcup99      & .714$|$.100          & .614$|$\textbf{.010}          & \textbf{.792}$|$.069    \\
			poker         & .677$|$.100          & .710$|$\textbf{.003}          & \textbf{.719}$|$.036    \\
			connect-4     & .592$|$.100          & .605$|$\textbf{.011}          & \textbf{.635}$|$.026    \\
			acoustic      & .348$|$.100          & .352$|$\textbf{.003}          & \textbf{.353}$|$.023    \\
			sensor        & .680$|$.100          & .667$|$\textbf{.009}          & \textbf{.748}$|$.075    \\
			hyperPlane    & .416$|$.100          & .385$|$\textbf{.028}          & \textbf{.417}$|$.100    \\
			electricity   & .295$|$.100          & .266$|$\textbf{.011}          & \textbf{.530}$|$.093    \\
			powersupply   & .650$|$.100          & .630$|$\textbf{.020}          & \textbf{.653}$|$.099    \\
			airlines      & \textbf{.682}$|$.100 & .654$|$\textbf{.027}          & .641$|$.100             \\
			sea           & \textbf{.502}$|$.100 & .489$|$\textbf{.021}          & \textbf{.502}$|$.100             \\
			covtype       & .956$|$.100          & \textbf{.980}$|$\textbf{.001} & .979$|$.001             \\ \bottomrule
		\end{tabular}
	\end{center}
	\caption{Summary of the online performance (left) and model size (right) on the full benchmark suite of the best three algorithms run with budget $10\%$ of the total stream length (model size is also expressed as a fraction of the stream length).}
	\label{tab:data_perf1}
\end{table}
\begin{table}[h]
	\begin{center}
		\begin{tabular}{@{}lccc@{}}
			\toprule
			\textbf{Data} & \textbf{NN3} & \textbf{AUTO-ADJ} & \textbf{AUTO-ADJ FIX} \\ \midrule
			kddcup99      & .550$|$.010    & .501$|$\textbf{.001}         & \textbf{.654}$|$.009    \\
			poker         & .674$|$.010    & .691$|$\textbf{.001}         & \textbf{.710}$|$.010    \\
			connect-4     & .575$|$.010    & .590$|$\textbf{.003}         & \textbf{.603}$|$.010    \\
			acoustic      & .345$|$.010    & .347$|$\textbf{.001}         & \textbf{.349}$|$.009    \\
			sensor        & .614$|$.010    & .620$|$\textbf{.001}         & \textbf{.759}$|$.009    \\
			hyperPlane    & .391$|$.010    & .361$|$\textbf{.003}         & \textbf{.427}$|$.010    \\
			electricity   & .130$|$.010    & .120$|$\textbf{.001}         & \textbf{.621}$|$.010    \\
			powersupply   & .609$|$.010    & .586$|$\textbf{.001}         & \textbf{.622}$|$.009    \\
			airlines      & .634$|$.010    & .590$|$\textbf{.003}         & \textbf{.668}$|$.010    \\
			sea           & .456$|$.010    & .462$|$\textbf{.002}         & \textbf{.473}$|$.009    \\
			covtype       & .945$|$.010    & .975$|$\textbf{.001}         & \textbf{.979}$|$\textbf{.001}    \\ \bottomrule
		\end{tabular}
	\end{center}
	\caption{Summary of the online performance (left) and model size (right) on the full benchmark suite of the three best algorithms run with budget $1\%$ of the total stream length (model size is also expressed as a fraction of the stream length).}
	\label{tab:data_perf2}
\end{table}
As we can observe from these tables, AUTO-ADJ FIX generally outperforms the other methods at the same model sizes. This is very evident on the datasets with drift, such as electricity, and when the budget limit is very small ($1\%$ of the total stream length). Along the same lines of Figure~\ref{fig:budget_summary}, we show in Figure~\ref{fig:budget_summaryFIX} the overall performance of the compared methods using all the budget/rate values $\{1\%,3\%,5\%,10\%\}$.
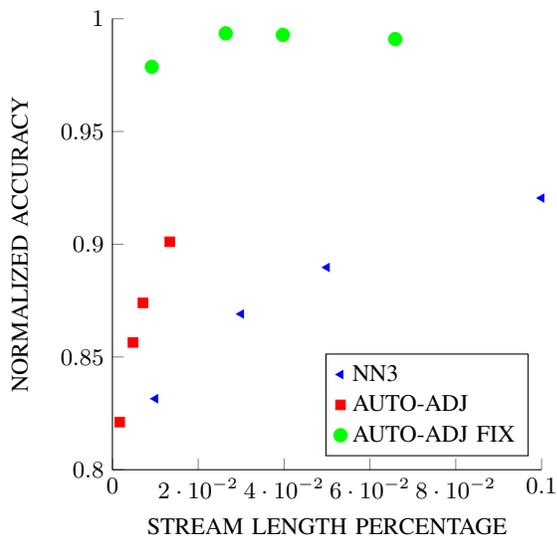
\begin{figure}[h,t]
	\centering
%
%
\begin{tikzpicture}

\begin{axis}[%
width=0.95092\figurewidthb,
height=\figureheightb,
at={(0\figurewidthb,0\figureheightb)},
scale only axis,
every outer x axis line/.append style={black},
every x tick label/.append style={font=\color{black}},
xmin=0,
xmax=0.1,
xlabel={STREAM LENGTH PERCENTAGE},
every outer y axis line/.append style={black},
every y tick label/.append style={font=\color{black}},
ymin=0.8,
ymax=1,
ylabel={NORMALIZED ACCURACY},
axis x line*=bottom,
axis y line*=left,
legend style={at={(0.97,0.03)},anchor=south east,legend cell align=left,align=left,draw=black}
]
\addplot [color=blue,mark size=1.7pt,only marks,mark=triangle*,mark options={solid,rotate=90,fill=blue}]
  table[row sep=crcr]{%
0.1	0.92045957384044\\
};
\addlegendentry{NN3};

\addplot [color=red,mark size=1.8pt,only marks,mark=square*,mark options={solid,fill=red}]
  table[row sep=crcr]{%
0.0133699375001397	0.901084182597802\\
};
\addlegendentry{AUTO-ADJ};

\addplot [color=green,mark size=2.5pt,only marks,mark=*,mark options={solid,fill=green}]
  table[row sep=crcr]{%
0.0659001015752364	0.990993400772385\\
};
\addlegendentry{AUTO-ADJ FIX};

\addplot [color=blue,mark size=1.7pt,only marks,mark=triangle*,mark options={solid,rotate=90,fill=blue},forget plot]
  table[row sep=crcr]{%
0.05	0.889722188989033\\
};
\addplot [color=red,mark size=1.8pt,only marks,mark=square*,mark options={solid,fill=red},forget plot]
  table[row sep=crcr]{%
0.00711201451614163	0.874008906340941\\
};
\addplot [color=green,mark size=2.5pt,only marks,mark=*,mark options={solid,fill=green},forget plot]
  table[row sep=crcr]{%
0.0397211740873381	0.992834151274051\\
};
\addplot [color=blue,mark size=1.7pt,only marks,mark=triangle*,mark options={solid,rotate=90,fill=blue},forget plot]
  table[row sep=crcr]{%
0.03	0.869065546987612\\
};
\addplot [color=red,mark size=1.8pt,only marks,mark=square*,mark options={solid,fill=red},forget plot]
  table[row sep=crcr]{%
0.00477804532125264	0.85643074907236\\
};
\addplot [color=green,mark size=2.5pt,only marks,mark=*,mark options={solid,fill=green},forget plot]
  table[row sep=crcr]{%
0.0263870408158675	0.993521017339718\\
};
\addplot [color=blue,mark size=1.7pt,only marks,mark=triangle*,mark options={solid,rotate=90,fill=blue},forget plot]
  table[row sep=crcr]{%
0.01	0.831498747750378\\
};
\addplot [color=red,mark size=1.8pt,only marks,mark=square*,mark options={solid,fill=red},forget plot]
  table[row sep=crcr]{%
0.00169876944075864	0.821117617400158\\
};
\addplot [color=green,mark size=2.5pt,only marks,mark=*,mark options={solid,fill=green},forget plot]
  table[row sep=crcr]{%
0.00918065508045772	0.978680996141109\\
};
\end{axis}
\end{tikzpicture}%
	\caption{
		Online performance against model size, averaged over the datasets. The model size is relative to the stream length, whereas the online performance is measured relative to the top-performing method on each dataset without restriction on model size. 
		\label{fig:budget_summaryFIX}
	}
\end{figure}
AUTO-ADJ FIX clearly outperforms all the other methods. This is not surprising, as AUTO-ADJ FIX has a better way of choosing the data points supporting the model as opposed to the random selection imposed on the other methods.

\section{Conclusion and Future Works}
\label{s:concl}

We presented an intuitive and easy to implement approach for nonparametric classification of data streams. Our more sophisticated algorithms feature the most appealing traits in stream mining applications: nonparametric classification, incremental learning, dynamic addition of new classes, small model size, fast prediction at testing time (logarithmic in the model size), essentially no parameters to tune. We empirically showed the effectiveness of our approach in different scenarios and against several standard baselines. In addition, we proved strong theoretical guarantees on the online performance of the most basic version of our approach.

Further research will focus on finding a confidence measure for the prediction scores, which could be used in a semi-supervised framework (e.g., active learning). Another interesting line of research is concerned with finding a more sophisticated and theoretically justified strategy to keep the model size bounded. A further, very challenging research line is in the direction of taming the curse of dimensionality problem that affects all nonparametric approaches. For instance, we plan on investigating notions of local dimensions that allow to perform dimensionality reduction locally and incrementally. 


\begin{thebibliography}{10}
	
	\bibitem{cesa2006prediction}
	N.~Cesa-Bianchi and G.~Lugosi.
	\newblock {\em Prediction, learning, and games}.
	\newblock Cambridge University Press, 2006.
	
	\bibitem{C:74}
	K.~Clarkson.
	\newblock Nearest-neighbor searching and metric space dimensions.
	\newblock {\em Nearest-Neighbor Methods for Learning and Vision: Theory and
		Practice}, 2005.
	
	\bibitem{crammer2003ultraconservative}
	K.~Crammer and Y.~Singer.
	\newblock Ultraconservative online algorithms for multiclass problems.
	\newblock {\em The Journal of Machine Learning Research}, 3:951--991, 2003.
	
	\bibitem{de2015splitting}
	R.~De~Rosa and N.~Cesa-Bianchi.
	\newblock Splitting with confidence in decision trees with application to
	stream mining.
	\newblock In {\em Neural Networks (IJCNN), The 2015 International Joint
		Conference on}. IEEE, 2015.
	
	\bibitem{de2014online}
	R.~De~Rosa, N.~Cesa-Bianchi, I.~Gori, and F.~Cuzzolin.
	\newblock Online action recognition via nonparametric incremental learning.
	\newblock In {\em Proceedings of the 25th British Machine Vision Conference
		(BMVC 2014)}, 2014.
	
	\bibitem{derosa2015abacoc}
	R.~De~Rosa, F.~Orabona, and N.~Cesa-Bianchi.
	\newblock The abacoc algorithm: a novel approach for nonparametric
	classification of data streams.
	\newblock In {\em Data Mining (ICDM), 2015 IEEE International Conference on}.
	IEEE, 2015.
	
	\bibitem{domingos2000mining}
	P.~Domingos and G.~Hulten.
	\newblock Mining high-speed data streams.
	\newblock In {\em Proceedings of the sixth ACM SIGKDD international conference
		on Knowledge discovery and data mining}, pages 71--80. ACM, 2000.
	
	\bibitem{duda2014novel}
	P.~Duda, M.~Jaworski, L.~Pietruczuk, and L.~Rutkowski.
	\newblock A novel application of hoeffding's inequality to decision trees
	construction for data streams.
	\newblock In {\em Neural Networks (IJCNN), 2014 International Joint Conference
		on}. IEEE, 2014.
	
	\bibitem{FederMG92}
	M.~Feder, N.~Merhav, and M.~Gutman.
	\newblock Universal prediction of individual sequences.
	\newblock {\em IEEE Transactions on Information Theory}, 38(4):1258--1270,
	1992.
	
	\bibitem{freund1999large}
	Y.~Freund and R.~E. Schapire.
	\newblock Large margin classification using the {P}erceptron algorithm.
	\newblock {\em Machine learning}, 37(3):277--296, 1999.
	
	\bibitem{gaber2007survey}
	M.~M. Gaber, A.~Zaslavsky, and S.~Krishnaswamy.
	\newblock A survey of classification methods in data streams.
	\newblock In {\em Data Streams}, pages 39--59. Springer, 2007.
	
	\bibitem{gama2011learning}
	J.~Gama, P.~Kosina, et~al.
	\newblock Learning decision rules from data streams.
	\newblock In {\em IJCAI Proceedings-International Joint Conference on
		Artificial Intelligence}, volume~22, page 1255. Citeseer, 2011.
	
	\bibitem{gama2013evaluating}
	J.~Gama, R.~Sebastiao, and P.~P. Rodrigues.
	\newblock On evaluating stream learning algorithms.
	\newblock {\em Machine Learning}, 90(3):317--346, 2013.
	
	\bibitem{hazan2007online}
	E.~Hazan and N.~Megiddo.
	\newblock Online learning with prior knowledge.
	\newblock In {\em Proceedings of 20th Annual Conference on Learning Theory},
	pages 499--513. Springer, 2007.
	
	\bibitem{hulten2001mining}
	G.~Hulten, L.~Spencer, and P.~Domingos.
	\newblock Mining time-changing data streams.
	\newblock In {\em Proceedings of the seventh ACM SIGKDD international
		conference on Knowledge discovery and data mining}, pages 97--106. ACM, 2001.
	
	\bibitem{conf/nips/KpotufeO13}
	S.~Kpotufe and F.~Orabona.
	\newblock Regression-tree tuning in a streaming setting.
	\newblock In C.~J.~C. Burges, L.~Bottou, Z.~Ghahramani, and K.~Q. Weinberger,
	editors, {\em NIPS}, pages 1788--1796, 2013.
	
	\bibitem{Krauthgamer:2004:NNS:982792.982913}
	R.~Krauthgamer and J.~R. Lee.
	\newblock Navigating nets: Simple algorithms for proximity search.
	\newblock In {\em Proceedings of the Fifteenth Annual ACM-SIAM Symposium on
		Discrete Algorithms}, SODA '04, pages 798--807, Philadelphia, PA, USA, 2004.
	Society for Industrial and Applied Mathematics.
	
	\bibitem{macqueen1967some}
	J.~MacQueen et~al.
	\newblock Some methods for classification and analysis of multivariate
	observations.
	\newblock 1967.
	
	\bibitem{matuszyk2013correcting}
	P.~Matuszyk, G.~Krempl, and M.~Spiliopoulou.
	\newblock Correcting the usage of the hoeffding inequality in stream mining.
	\newblock In {\em Advances in Intelligent Data Analysis XII}, pages 298--309.
	Springer, 2013.
	
	\bibitem{Orabona09}
	F.~Orabona.
	\newblock {\em {DOGMA}: a {MATLAB} toolbox for Online Learning}, 2009.
	\newblock Software available at \url{http://dogma.sourceforge.net}.
	
	\bibitem{read2012scalable}
	J.~Read, A.~Bifet, G.~Holmes, and B.~Pfahringer.
	\newblock Scalable and efficient multi-label classification for evolving data
	streams.
	\newblock {\em Machine Learning}, 88(1-2):243--272, 2012.
	
	\bibitem{rutkowski2012decision}
	L.~Rutkowski, L.~Pietruczuk, P.~Duda, and M.~Jaworski.
	\newblock Decision trees for mining data streams based on the {M}c{D}iarmid's
	bound.
	\newblock {\em IEEE Transactions on Knowledge and Data Engineering},
	25(6):1272--1279, 2013.
	
	\bibitem{shaker2012iblstreams}
	A.~Shaker and E.~H{\"u}llermeier.
	\newblock Iblstreams: a system for instance-based classification and regression
	on data streams.
	\newblock {\em Evolving Systems}, 3(4):235--249, 2012.
	
	\bibitem{steinwart2002influence}
	I.~Steinwart.
	\newblock On the influence of the kernel on the consistency of support vector
	machines.
	\newblock {\em The Journal of Machine Learning Research}, 2:67--93, 2002.
	
	\bibitem{DF}
	Tsymbal.
	\newblock {\em Data sets with concept drift}, 2006.
	\newblock Available online at
	\url{http://www.win.tue.nl/~mpechen/data/DriftSets/}.
	
	\bibitem{tsymbal2004problem}
	A.~Tsymbal.
	\newblock The problem of concept drift: definitions and related work.
	\newblock {\em Computer Science Department, Trinity College Dublin}, 106, 2004.
	
	\bibitem{wilson2000reduction}
	D.~R. Wilson and T.~R. Martinez.
	\newblock Reduction techniques for instance-based learning algorithms.
	\newblock {\em Machine learning}, 38(3):257--286, 2000.
	
	\bibitem{zhang2011enabling}
	P.~Zhang, B.~J. Gao, X.~Zhu, and L.~Guo.
	\newblock Enabling fast lazy learning for data streams.
	\newblock In {\em Data Mining (ICDM), 2011 IEEE 11th International Conference
		on}, pages 932--941. IEEE, 2011.
	
	\bibitem{SDMR}
	X.~Zhu.
	\newblock {\em Stream data mining repository}, 2010.
	\newblock Available online at \url{http://www.cse.fau.edu/~xqzhu/stream.html}.
	
\end{thebibliography}


\end{document}